\documentclass[12pt]{article}
\usepackage{a4wide}
\usepackage{latexsym,amssymb,amsmath,epsfig,amsfonts}
\usepackage{algorithm}
\usepackage[noend]{algpseudocode}
\usepackage[title]{appendix}
\usepackage{eucal}
\usepackage{multirow,multicol,array,bm}
\usepackage{mathtools}
\usepackage{tabularx, booktabs, makecell, caption}
\usepackage{siunitx}
\usepackage[utf8]{inputenc}
\usepackage{comment}
\usepackage{bbm}
\usepackage{graphicx}
\usepackage{float}

\usepackage[backend=bibtex,style=numeric]{biblatex}
\usepackage[skip=0.333\baselineskip]{subcaption}
\usepackage[font=normal,labelfont=bf]{caption}

\usepackage{amsthm}
\usepackage{color}

\usepackage[colorlinks=true,allcolors=blue]{hyperref}

\providecommand{\keywords}[1]{\textbf{\textit{Keywords---}} #1}
\numberwithin{table}{section}
\numberwithin{equation}{section}
\newtheorem{theorem}{Theorem}

\newtheorem{proposition}[theorem]{Proposition}

\newtheorem{definition}{Definition}
\begin{document}
\title{A time-series classification framework for individual-level absenteeism prediction under severe class imbalance}

\author{Kwong Ho Li\thanks{School of Mathematical Sciences, Adelaide University, SA 5000, Australia, ronald.li@adelaide.edu.au.} \and Matthew Roughan\thanks{School of Mathematical Sciences, Adelaide University, SA 5000, Australia, matthew.roughan@adelaide.edu.au.}\and Wathsala Karunarathne\thanks{School of Mathematical Sciences, Adelaide University, SA 5000, Australia, wathsala.karunarathne@adelaide.edu.au.}}

\maketitle

\begin{abstract}

   Staff absenteeism imposes substantial operational costs in high-demand work
   environments such as healthcare, emergency services, meat processing, construction,
   and courier and delivery services, where proactive workforce planning depends on
   reliable individual-level absence prediction. Existing regression and classification
   approaches share a structural limitation; they map features observed at time $t$ to
   labels at the same time $t$, reproducing already-realised outcomes rather than
   predicting future events, and discard the sequential behavioural structure inherent in
   individual attendance histories. We propose a Time Series Classification (TSC)
   framework that separates historical attendance sequences from future absence labels,
   enabling genuinely proactive prediction. Due to the lack of public longitudinal
   attendance data, we construct a reproducible simulated dataset calibrated to the UCI
   dataset. We analyse Binary Focal Loss (BFL) and Geometric Mean (G-Mean) loss
   under severe class imbalance using only the imbalance ratio $\rho$. For BFL, the initial
   gradient ratio is $\rho\alpha/(1-\alpha)$, implying the balanced weight
   $\alpha = 1/(1+\rho) \approx 0.023$. Experiments show that performance is governed
   mainly by $\alpha$, with BFL ($\alpha,\gamma=0$) achieving specificity $0.813$
   and balanced accuracy $0.888$, comparable to G-Mean. Unlike BFL, G-Mean adapts
   automatically without parameter calibration. Among three deep learning architectures
   evaluated, Long Short-Term Memory (LSTM), Convolutional Neural Network (CNN),
   and the hybrid LSTM-Fully Convolutional Network (LSTM-FCN), the LSTM-FCN
   delivers strong precision and specificity. Stable performance is obtained with batch
   sizes $\geq 64$ and window sizes between $40$--$80$ days, yielding balanced accuracy
   of approximately $80\%$ on held-out test data. These results provide practical
   guidance for severely imbalanced binary classification in workforce management,
   healthcare, and fraud detection.
\end{abstract}

\keywords{Absenteeism, Time series classification, Deep learning, Class imbalance, LSTM, Sequential modelling}

\section{Introduction}

Effective workforce management in high-demand operational environments requires not only reactive responses to staff absences, but proactive knowledge about which employees are likely to be absent and when. Organisations that fail to anticipate absences face operational disruptions and reduced service efficiency from understaffing; those that overcompensate by maintaining excessive staffing buffers incur unnecessarily high wage costs. This tension is particularly acute in environments characterised by significant physical, psychological, and emotional demands, including healthcare and emergency services~\cite{bardhan2023psychosocial, lawn2020effects}, construction~\cite{gomez2023stress}, meat processing industries~\cite{macnair2023should}, and courier and delivery services, where workers are subject to physically demanding conditions, time pressure, fatigue accumulation, and sustained stress~\cite{slade2023psychological}. Reliable individual-level absence prediction is therefore a core knowledge requirement for data-driven workforce decision support. In this study, we define an employee as absent when they fail to appear for an allocated shift without prior notification to their line manager. 

Despite the evident operational need, the knowledge representation underlying existing machine learning approaches to absenteeism prediction remains inadequate for proactive decision support. Existing methods primarily employ regression, classification, or time-series forecasting frameworks~\parencite{llamasblazquez2025predicting,Popa2025An,Asghar2021Trends}, treating absenteeism as a duration, rate, or aggregated class variable derived from employee characteristics or aggregate temporal indicators~\parencite{llamasblazquez2025predicting,piciga2024predicting,Popa2025An}. These formulations are limited by their reliance on same time mappings; features $X_t$ observed at time $t$ are mapped to labels $y_t$ at that same time $t$, so that by the time input features are available, the attendance outcome has already occurred.  Rather than supporting proactive workforce decisions, such models learn to reproduce already-realised outcomes. A further limitation is that these approaches aggregate individual behaviour into population-level summaries, discarding the sequential patterns within each employee's attendance history that may carry predictive information, such as accumulating fatigue, recurring health episodes, or deteriorating work-life balance.

Time Series Classification (TSC) addresses both limitations within a unified framework. As a supervised learning paradigm, TSC learns a mapping from historical sequences of observations to future class labels, preserving the temporal ordering between predictors and targets and enabling genuinely forward-looking inference. While deep learning-based TSC has demonstrated strong performance in healthcare, power systems, and finance~\parencite{bagnall2016greattimeseriesclassification,Xie2024Prototype,Hirnschall2025Semi}, its application to sequential individual attendance behaviour has not previously been explored. Operationalising TSC for absenteeism prediction introduces two additional methodological challenges. First, individual-level attendance records are sensitive longitudinal data that are not publicly available, precluding direct empirical evaluation on real organisational datasets. Second, attendance records exhibit severe class imbalance; in this paper, attendance (present) is the positive class ($y = 1$) and constitutes approximately $97.69\%$ of records, while absence is the negative class ($y = 0$) at approximately $2.31\%$, yielding an imbalance ratio $\rho = N^{+}/N^{-} \approx 42$. Standard loss functions tend to bias models towards the majority class, resulting in poor specificity (the ability to correctly identify genuinely absent employees), which is the operationally critical failure mode in workforce planning.

The choice of loss function under severe imbalance has received limited theoretical analysis within the TSC literature. While Binary Focal Loss (BFL)~\parencite{lin2017focal} is widely used for imbalanced classification, and Geometric Mean (G-Mean) loss has been proposed as an alternative~\parencite{raj2016towards}, their gradient dynamics remain under-explored. Specifically, how these functions behave under the majority-positive convention of attendance prediction and when BFL’s focusing mechanism fails, has not yet been analytically characterised.

This paper proposes a TSC framework for individual-level absenteeism prediction, presents experimental results comparing BFL and G-Mean loss, and provides a theoretical analysis of the observed performance gap. To address the absence of publicly available longitudinal attendance data, we construct a reproducible simulated dataset rigorously calibrated to the distributional properties of the UCI Machine Learning Repository \emph{Absenteeism at Work} benchmark~\parencite{absenteeism_at_work_445}, establishing a
controlled experimental environment for methodological evaluation. We evaluate three established deep learning architectures: Long Short-Term Memory (LSTM), Convolutional Neural Network (CNN), and the hybrid LSTM-Fully Convolutional Network (LSTM-FCN), and two imbalance-aware loss functions, G-Mean loss and BFL, across a range of hyperparameter configurations. The resulting framework produces categorical absence predictions from historical attendance sequences, requiring no further transformation for direct use in workforce scheduling systems.

This paper makes four principal contributions. First, we propose a TSC formulation for individual-level absenteeism prediction as a sequential knowledge problem. We reformulate staff absenteeism prediction as a time series classification task at the individual level, explicitly encoding the temporal separation between historical attendance sequences and future absence labels. This formulation enables the extraction of sequential behavioural knowledge from attendance histories for proactive prediction, in contrast to existing approaches that model already-realised outcomes or discard individual-level temporal structure. Second, we provide a theoretical analysis of loss function gradient dynamics under severe class imbalance, grounded in experimental findings. We demonstrate empirically that BFL performance is governed by $\alpha$, with the analytically derived $\alpha^{*} = 1/(1+\rho) \approx 0.023$ ($\rho = 42.31$, from the training data) producing the best BFL results. We derive that the BFL gradient ratio at initialisation is $\rho\alpha/(1-\alpha)$, independent of $\gamma$, and prove that G-Mean loss exhibits a self-correcting gradient structure invariant to $\rho$. A systematic sweep across $\alpha \in \{\alpha^{*}, 0.05, 0.1, 0.2\}$ and $\gamma \in \{0, 1, 2, 5\}$ confirms that specificity increases monotonically as $\alpha \to \alpha^{*}$, that $\gamma = 0$ is optimal at $\alpha^{*}$, and that BFL with $\alpha^{*}$, $\gamma = 0$ achieves specificity $0.813$ and balanced accuracy $0.888$, competitive with G-Mean (specificity $0.844$) which requires no calibration. These results provide a principled basis for loss function selection beyond the absenteeism domain. Third, we present a systematic evaluation of architectures and hyperparameters for attendance sequence modelling. We establish that the LSTM-FCN architecture achieves the most consistent precision and specificity, that model performance stabilises for batch sizes of at least $64$, and that window sizes between $40$ and $80$ days yield the most balanced performance for a 5-day prediction horizon. These findings constitute reusable design knowledge for practitioners applying TSC to imbalanced sequential prediction tasks. Fourth, we develop a calibrated simulation framework for longitudinal attendance research. We develop and publicly release a simulation methodology that generates synthetic individual-level attendance sequences calibrated to empirical distributional properties, addressing the absence of publicly available longitudinal data and providing a reproducible baseline for future comparative evaluation.

The remainder of this paper is organised as follows. Section~\ref{sec:literature_review} reviews related work on machine learning approaches to absenteeism prediction and deep learning-based TSC. Section~\ref{sec:TSC} describes the proposed TSC framework, including the data simulation procedure, model architectures, loss functions, and experimental design. Section~\ref{sec:results} presents experimental results and, following the empirical loss function comparison, provides a theoretical analysis of the observed specificity gap and derives practical guidance for applying TSC to absenteeism prediction. Section~\ref{sec:conclusion} concludes with directions for future research.

\section{Related work}\label{sec:literature_review}

\subsection{Current machine learning approaches}\label{sec:current_ML_approaches}

In studies applying machine learning to the prediction of staff absenteeism, researchers typically formulate the problem as one of three machine learning tasks: regression, classification, or time-series forecasting.

\subsubsection{Regression} 

Regression-based absenteeism studies have employed a range of standard models, including Linear Regression, Support Vector Regression (SVR), Decision Tree and Random Forest Regressor, K-Nearest Neighbours (KNN), Gradient Boosted Regression Trees (GBRT), and Neural Networks \parencite{xames2025socially, tewari2020predictive, piciga2024predicting, llamasblazquez2025predicting}. \textcite{llamasblazquez2025predicting} employed a regression-based approach using the \emph{Absenteeism at Work} dataset, which consists of three years of data from a Brazilian courier company obtained from the UCI Machine Learning Repository~\parencite{absenteeism_at_work_445}. The dataset comprises $740$ absence records described by a range of demographic, clinical, and occupational characteristics. The study focused on predicting absence duration using features such as age, body mass index (BMI), distance from residence to workplace, transportation expense, number of children, education level (coded from $1$ to $4$), length of service, binary indicators of social drinking and smoking behaviour, day of the week, month, and seasonal factors. The best-performing regression models reported in this study achieve a Coefficient of Determination ($R^2$) of $0.13$. Regression-based studies commonly assess model performance using the $R^2$, Mean Squared Error ($MSE$), Root Mean Squared Error ($RMSE$), and Mean Absolute Error ($MAE$) \parencite{xames2025socially, tewari2020predictive, piciga2024predicting, llamasblazquez2025predicting}. 

\subsubsection{Classification}

Classification is the predominant formulation for staff absenteeism prediction. A wide range of machine learning models have been applied, including Decision Trees, Gradient Boosted Trees (GBRT), Random Forests, CatBoost, Bayesian Networks, Naïve Bayes, Support Vector Machines (SVMs), Multi-layer Perceptrons (MLPs), Multinomial Logistic Regression (MLR), and Transformer-based architectures. Tree-based methods are among the most commonly used approaches for this task \parencite{Wahid2019Predicting, Nath2022Incorporating, Raman2020A, Popa2025An, alzubi_classification_2024, Lumintu2025Supporting}. The \emph{Absenteeism at Work} dataset from the UCI Machine Learning Repository is widely used for classification-based studies ~\parencite{absenteeism_at_work_445}. Several labelling strategies have been proposed. \textcite{Wahid2019Predicting} transformed absenteeism hours into four categorical classes: \emph{not absent}, \emph{hours}, \emph{days}, and \emph{weeks}. \textcite{Nath2022Incorporating} grouped absenteeism into three ordinal categories (A+, B+, and C+), while \textcite{Raman2020A} adopted a binary classification scheme indicating absence or non-absence. Both \textcite{Wahid2019Predicting, Raman2020A} utilised the full feature set excluding staff identifiers, whereas \textcite{Nath2022Incorporating} applied feature selection to construct reduced input sets for model training. The best-performing models reported in these studies achieve classification accuracies of approximately $80$--$90\%$~\parencite{Wahid2019Predicting, Nath2022Incorporating}.

\begin{table}[h]
	\centering
	\begin{tabular}{c|cc}
		\hline
		& \textbf{Predicted Positive} & \textbf{Predicted Negative} \\
		\hline
		\textbf{Actual Positive} & True Positive (TP) & False Negative (FN) \\
		\textbf{Actual Negative} & False Positive (FP) & True Negative (TN) \\
		\hline
	\end{tabular}
	\caption{Confusion matrix. In this paper, positive ($y=1$) denotes attendance (present) and negative ($y=0$) denotes absence.}
	\label{tab:confusion_matrix}
\end{table}

Classification performance is commonly assessed using a set of complementary evaluation metrics derived from the confusion matrix (Table~\ref{tab:confusion_matrix}). These metrics capture different aspects of predictive accuracy and error characteristics. Commonly reported metrics include Accuracy, Precision, Recall, and $F1$ Score \parencite{Wahid2019Predicting, Nath2022Incorporating, Raman2020A, Popa2025An, alzubi_classification_2024, Lumintu2025Supporting}.

\subsubsection{Time series forecasting}

Compared with regression and classification approaches, time series forecasting has received relatively limited attention in the staff absenteeism literature. Existing studies primarily model temporal dynamics in aggregated absence rates, often incorporating exogenous covariates. A limited yet diverse set of models has been applied, ranging from classical statistical methods to hybrid neural frameworks. These include Autoregressive Integrated Moving Average (ARIMA), Seasonal ARIMA (SARIMA), and NeuralProphet. ARIMA and SARIMA capture temporal dependencies and seasonality through autoregressive and moving average components, while NeuralProphet extends this by integrating neural network-based autoregressive structures with classical trend and seasonality components, enabling the modelling of non-linear dynamics and exogenous covariates \parencite{Asghar2021Trends, Lindell2023Processing, lyszczarz_excess_2025}. In time series forecasting studies, the Akaike Information Criterion (AIC) is commonly used for model selection when predicting future trends. When the focus is on prediction accuracy, regression metrics such as MSE and MAE are typically employed \parencite{Asghar2021Trends, Lindell2023Processing, lyszczarz_excess_2025}.

\subsection{Time series classification}

Deep learning-based TSC has achieved notable success across multiple sectors. In healthcare, recurrent architectures such as LSTM networks have been used to classify multivariate electronic health records and detect abnormal electrocardiogram (ECG) signals~\parencite{Xie2024Prototype}. In power systems, these approaches enable accurate detection and classification of electrical faults, improving grid reliability~\parencite{Jiriwibhakorn2025Time}. Similarly, in finance, deep learning models support efficient and uncertainty-aware classification for tasks such as credit card fraud detection, enhancing risk management~\parencite{Hirnschall2025Semi}. Following the formalisation presented by \textcite{ismail_fawaz_deep_2019}, a dataset for TSC can be defined as follows: 

\begin{definition} 

A univariate time series $X = [x_1, x_2, \dots, x_T]$ is an ordered set of real values. The length of $X$ is equal to the number of real values $T$.
\end{definition}
	
\begin{definition} 

An $M$-dimensional Multivariate Time Series (MTS), $X = [X^1, X^2, \dots, X^M]$ consists of $M$ different univariate time series with $X^i \in \mathbb{R}^T$.

\end{definition}
	
\begin{definition}  A dataset $D = \{(X_1, Y_1), (X_2, Y_2), \dots, (X_N, Y_N)\}$ is a collection of pairs $(X_i, Y_i)$ where $X_i$ could either be a univariate or multivariate time series with $Y_i$ as its corresponding one-hot label vector. For a dataset containing $K$ classes, the one-hot label vector $Y_i$ is a vector of length $K$ where each element $j \in [1, K]$ is equal to $1$ if the class of $X_i$ is $j$ and $0$ otherwise.

\end{definition}

The objective of TSC is to train a classifier on dataset $D$ that maps input time series to a probability distribution over the possible class labels.

Among the deep learning architectures proposed for TSC, recurrent models such as the Gated Recurrent Unit (GRU)~\parencite{Tan2021Time} and LSTM ~\parencite{Karim2018LSTM}, convolutional approaches such as Residual Networks (ResNets), InceptionTime~\parencite{Wang2017Time, Ismail2020InceptionTime} and the Fully Convolutional Network (CNN/FCN)~\parencite{Wang2017Time}, hybrid architectures such as
LSTM-FCN~\parencite{Karim2018LSTM} and its multivariate extension MLSTM-FCN~\parencite{Fazle2019Multivariate}, and Transformer-based frameworks~\parencite{Zerveas2021Transformer} have all demonstrated competitive performance on benchmark datasets. This study focuses on three architectures selected for their established performance on univariate classification tasks and computational tractability: LSTM, CNN/FCN, and the hybrid LSTM-FCN. 

\section{Methods} \label{sec:TSC}

The TSC formulation introduced above resolves the same-time mapping limitation identified in Section~\ref{sec:literature_review}: rather than learning a mapping from input features ($X$) to target labels ($y$) observed within the same time context, TSC explicitly separates the input sequence from the future label it predicts. The term \emph{time series} refers here to the temporal structure of the data, where input observations precede the corresponding target labels in time, so that the model learns from historical attendance patterns to predict future attendance outcomes rather than reproduce already-realised ones. \emph{Classification} defines the learning objective as the prediction of discrete categorical labels, whilst deep learning provides neural architectures capable of capturing complex temporal dependencies within sequential data.

Framing the problem this way enables the model to learn relationships between past attendance behaviour and future absence outcomes, supporting proactive prediction of upcoming absences rather than the retrospective characterisation produced by the same-time formulations reviewed above. This is substantially more valuable for workforce planning and resource allocation, and producing categorical outputs aligns directly with operational decision-making, requiring no further transformation.

As individual-level longitudinal attendance records are not publicly available,  we constructed a simulated dataset based on the \emph{Absenteeism at Work} dataset~\parencite{absenteeism_at_work_445}. This simulation is designed to approximate daily attendance sequences and to support a demonstrative implementation of deep learning-based TSC. In the following sections, we describe the data simulation procedure, model architectures, loss functions, and training settings. As results are derived from simulated data, the evaluation centres on methodological feasibility and training behaviour, rather than absolute predictive accuracy.

\subsection{Data simulation}

We simulate attendance records based on the UCI Machine Learning Repository dataset~\parencite{absenteeism_at_work_445}. The dataset contains $740$ absence records from $36$ employees collected over 36 months (July 2007 to July 2010) at a courier company in Brazil. Each record contains demographic, occupational, and health-related characteristics, along with absence duration measured in hours. To generate a time series dataset suitable for TSC, we implemented a two-stage data expansion and synthesis procedure. First, the monthly-level data was transformed into daily attendance records for the original $36$ employees. Second, these records were used as seed data to generate a larger synthetic workforce through distribution-preserving sampling.

\subsubsection{Stage 1: Expansion from monthly to daily resolution}

The expansion procedure converted aggregate monthly absence records into binary daily attendance sequences. Records with zero absence hours were removed, as they represent no event markers rather than actual attendance observations. For each remaining absence event, the number of absent work days was computed as the ceiling of absence hours divided by 8, assuming a standard 8-hour work day.

To resolve temporal ambiguity in the original dataset, which recorded only the month of absence without specifying the year, we applied a sequential assignment strategy. Each absence record specifies a day of week indicator, from which  all matching day of week within the assigned month and year were identified. The required number of absence days was then randomly sampled from these candidates without replacement. A complete calendar spanning all days from $1^{st}$ of July 2007 to $31^{st}$ July 2010 was generated, resulting in $1,127$ consecutive days. Binary attendance labels were assigned to each employee-day combination, with absence marked as $0$ and all remaining days marked as $1$ (present).

\subsubsection{Stage 2: Synthetic employee generation}

To demonstrate the TSC framework at scale, we expanded the $36$ employee seed dataset by generating $964$ additional synthetic employees through distribution-preserving sampling, resulting in a total of $1,000$ employees. While larger datasets would be preferable for production 
deployment, this size enables methodological evaluation while maintaining 
computational tractability. For each employee characteristic, we used empirical distributions from the seed data; continuous variables (age, BMI, service time, height, weight, transportation expense, distance from residence to work, hit  target) were modelled using normal distributions fitted to sample means and standard deviations. Categorical variables (education level, number of children, social drinker status, social smoker status, number of pets) were modelled using discrete probability distributions based on observed frequencies.

Individual absence rates across the $36$ seed employees were computed as the proportion of days absent over the observation period, resulting in a mean absence rate of $2.31\%$ and a standard deviation of $2.23\%$. For each synthetic employee, characteristics were sampled from the extracted distributions. An 
individual absence rate was sampled from a normal distribution with mean 
$2.31\%$ and standard deviation $2.23\%$, then clipped to the range $[0, 10\%]$.  Daily absence probabilities were computed by scaling the seed  data's daily absence rates accordingly, and binary attendance labels were generated through independent Bernoulli sampling.

\subsection{Time Series Data Splitting}

\begin{figure}[h!]
	\centering
	\includegraphics[width=0.2\textwidth, angle=90]{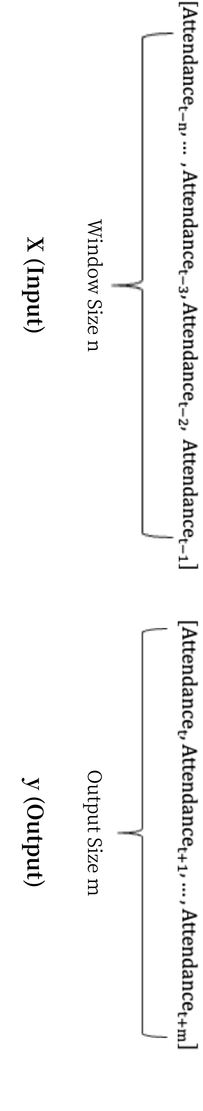}
	\caption{Illustration of the sliding window formulation for time series prediction without staff background information. The past $n$ consecutive days of attendance form the input sequence $X$, and the subsequent $m$ days form the target label $y$.}
	\label{fig:trad_TSC}
\end{figure}

\begin{figure}[h!]
	\centering
	\includegraphics[width=0.2\textwidth, angle=90]{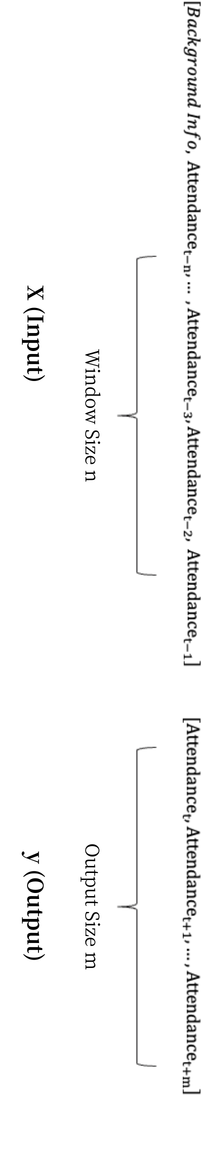}
	\caption{Illustration of the sliding window formulation for time series prediction with staff background information. As in Figure~\ref{fig:trad_TSC}, the input sequence $X$ spans $n$ days and the target label $y$ spans the subsequent $m$ days; here $X$ additionally includes static employee characteristics.}
	\label{fig:hybrid_TSC}
\end{figure}

To construct the TSC dataset, we implemented a custom function to transform daily attendance records into fixed-length time series segments and to partition the data into training, validation, and testing sets. Consistent with the definition of the TSC formulation, the input data must respect temporal ordering of observations to avoid leakage and preserve the sequential structure of the input. Accordingly, samples are generated by applying a sliding window over the attendance records, as illustrated in Figure~\ref{fig:trad_TSC}. Specifically, for each staff member, the past $n$ consecutive days serve as the input sequence $X$ (Definition 1 and 2), and the subsequent $m$ days as the corresponding target label $y$ (Definition 3), where $n$ and $m$ are positive integers. This process is repeated across all valid time windows, dates, and staff members, yielding an aggregated dataset of time series samples suitable for supervised learning. 

In addition to this standard approach, we investigated a hybrid splitting strategy that incorporates static background information into the input sequences. While preserving the temporal structure of the attendance data, this hybrid formulation augments the time series input with additional contextual features, enabling an assessment of how background attributes influence predictive performance. This hybrid splitting strategy is illustrated in Figure~\ref{fig:hybrid_TSC}.

The resulting dataset is partitioned into training, validation, and testing subsets using a 60:20:20 split. To prevent information leakage and ensure a realistic evaluation setting, the split is performed at the staff level based on staff identifiers, such that records from the same individual do not appear across multiple subsets.

\subsection{Data preprocessing}

As an initial preprocessing step, the \texttt{Disciplinary\_failure} feature was removed from the dataset, as it contained no variance and consisted entirely of zero-valued entries. Retaining such a constant feature would not contribute meaningful information to the learning process.

For all numerical attributes, including \texttt{Transportation\_expense}, \texttt{Distance\_from\_\\Residence\_to\_Work}, \texttt{Service\_time}, \texttt{Age}, \texttt{Hit\_target}, \texttt{Weight}, \texttt{Height}, and \texttt{Body\_mass\_index}, feature scaling was applied using standardisation. Specifically, a \texttt{StandardScaler} was fitted on the training data and subsequently applied to the validation and test sets to ensure consistent normalisation while avoiding information leakage.

\subsection{Model architectures }

Three deep learning architectures were evaluated in this study: a pure LSTM model, a pure CNN model, and a hybrid LSTM-FCN model. The LSTM-FCN architecture follows the design proposed by \textcite{Karim2018LSTM}. Tables~\ref{tab:lstm_arch}, \ref{tab:cnn_arch}, and \ref{tab:lstm_fcn_arch} summarise the architectures of the LSTM, CNN, and LSTM-FCN models, respectively. The batch size is denoted by $B$, defined as the number of samples processed concurrently during a single forward pass of the network. Processing data in batches, rather than individually, enhances computational efficiency and contributes to more stable training dynamics. The LSTM model reshapes each input sample of dimension $23$ (corresponding to the number of input features) into a sequence and processes it using a recurrent layer with 64 hidden units, where the final hidden state is used as a feature representation for a fully connected layer to produce a scalar prediction. In contrast, the CNN model reshapes the input into a one-dimensional sequence and applies two convolutional blocks to extract hierarchical features, which are aggregated via global average pooling and mapped to a scalar output through a fully connected layer. The LSTM-FCN model integrates both approaches by combining a convolutional branch, which captures local patterns using a series of 1D Convolutional layers and global pooling, with a recurrent branch that models sequential dependencies; their representations are concatenated and passed to a fully connected layer to generate the final prediction.

\begin{table}[H]
	\centering
	\begin{tabular}{lllll}
		\hline
		\textbf{Component} & \textbf{Layer Type} & \textbf{Input Shape} & \textbf{Output Shape} \\
		\hline
		Input Layer & Pre-processing & $(B, 23)$ & $(B, 1, 23)$ \\
		Recurrent Layer & \texttt{nn.LSTM} & $(B, 1, 23)$ & $(1, B, 64)$  \\
		Feature Extraction & Hidden State Select & $(1, B, 64)$ & $(B, 64)$  \\
		Output Layer & \texttt{nn.Linear} & $(B, 64)$ & $(B, 1)$  \\
		Post-processing & Squeeze & $(B, 1)$ & $(B)$ \\
		\hline
	\end{tabular}
    \caption{Architecture of the LSTM model. $B$ denotes batch size; the input dimension $23$ corresponds to the number of employee features.}
    \label{tab:lstm_arch}
\end{table}

\begin{table}[H]
	\centering
	\begin{tabular}{lllll}
		\hline
		\textbf{Component} & \textbf{Layer Type} & \textbf{Input Shape} & \textbf{Output Shape} \\
		\hline
		Input Layer & Pre-processing & $(B, 23)$ & $(B, 1, 23)$  \\
		Block 1 & Conv1D + BN + ReLU & $(B, 1, 23)$ & $(B, 32, 23)$  \\
		Block 2 & Conv1D + BN + ReLU & $(B, 32, 23)$ & $(B, 64, 23)$ \\
		Global Pooling & AdaptiveAvgPool1D & $(B, 64, 23)$ & $(B, 64, 1)$\\
		Flattening & Squeeze & $(B, 64, 1)$ & $(B, 64)$ \\
		Output Layer & \texttt{nn.Linear} & $(B, 64)$ & $(B, 1)$ \\
		Post-processing & Squeeze & $(B, 1)$ & $(B)$ \\
		\hline
	\end{tabular}
    \caption{Architecture of the CNN model. $B$ denotes batch size; BN denotes batch normalisation.}
    \label{tab:cnn_arch}
\end{table}

\begin{table}[H]
	\centering
	\begin{tabular}{lllll}
		\hline
		\textbf{Component} & \textbf{Layer Type} & \textbf{Input Shape} & \textbf{Output Shape}  \\
		\hline
		Input Layer& Pre-processing & $(B, 23)$ & $(B, 1, 23)$ \\
		FCN Branch & 3$\times$ Conv1D Blocks & $(B, 1, 23)$ & $(B, 128, 23)$ \\
		FCN Pooling & AdaptiveAvgPool1D & $(B, 128, 23)$ & $(B, 128)$ \\
		LSTM Branch & Transpose & $(B, 1, 23)$ & $(B, 23, 1)$\\
		LSTM Recurrent & \texttt{nn.LSTM} & $(B, 23, 1)$ & $(1, B, 128)$ \\
		LSTM Dropout & \texttt{nn.Dropout} & $(B, 128)$ & $(B, 128)$  \\
		Fusion Layer & Concatenation & $(B, 128),(B,128)$ & $(B, 256)$\\
		Output Layer & \texttt{nn.Linear} & $(B, 256)$ & $(B, 1)$\\
		Post-processing & Squeeze & $(B, 1)$ & $(B)$ \\
		\hline
	\end{tabular}
    \caption{rchitecture of the LSTM-FCN model. $B$ denotes batch size; the FCN and LSTM branches process the input in parallel and are concatenated before the output layer.}
    \label{tab:lstm_fcn_arch}
\end{table}

\subsection{Loss function}

The attendance labels in the dataset exhibit a high degree of class imbalance, with attendance records substantially outnumbering absence records. This imbalance reflects typical business operations, as sustained organisational functioning requires the majority of staff to be present on most working days, while absences occur relatively infrequently. However, such imbalance poses challenges for supervised learning, as standard loss functions may bias the model toward the majority class. To address this issue, we adopt two imbalance-aware strategies: BFL, which dynamically downweights well-classified majority samples to focus training on difficult minority cases, and G-Mean loss, which optimises the model by jointly maximising the true positive and true negative rates, thereby promoting balanced performance across both classes. We then compare these approaches to determine which yields superior specificity and balanced accuracy under severe class imbalance.

BFL extends Binary Cross-Entropy by introducing a focusing parameter $\gamma$ that down-weights the contribution of well-classified (``easy'') examples, which typically belong to the majority class. This mechanism encourages the model to focus on misclassified or hard examples from the minority class. In addition, a class-balancing factor $\alpha$ can be applied to control the relative importance of positive and negative samples. Together, these modifications prevent the loss from being dominated by abundant easy samples and improve sensitivity to rare but critical events, such as staff absence~\parencite{Ayesha2023A}. The BFL is defined as:
\begin{equation}
\text{FL}(p, y) =
\begin{cases}
-\alpha (1 - p)^{\gamma} \log(p), & \text{if } y = 1, \\
-(1 - \alpha) p^{\gamma} \log(1 - p), & \text{if } y = 0,
\end{cases}
\end{equation}


where $p \in [0,1]$ denotes the predicted probability of the positive class, $y \in \{0,1\}$ is the ground-truth label, $\gamma \geq 0$ is the focusing parameter that reduces the relative loss contribution of well-classified examples, and $\alpha \in [0,1]$ is a class-balancing factor that controls the relative importance of positive and negative samples.

While G-Mean is widely used as an evaluation metric for imbalanced classification, it can also serve as a training objective by directly optimising the geometric mean of True Positive Rate (TPR) and True Negative Rate (TNR). To enable gradient-based optimisation, TPR and TNR are computed using soft predictions from the sigmoid output rather than hard binary labels, yielding a differentiable approximation suitable for back-propagation~\parencite{raj2016towards}. The G-Mean loss is defined as:
\begin{equation}
  \mathcal{L}_{\text{GM}} = 1 - \sqrt{\widehat{\text{TPR}} \times
    \widehat{\text{TNR}}},
  \label{eq:gmean}
\end{equation}
where the soft approximations are:
\begin{equation}
  \widehat{\text{TPR}} = \frac{1}{N^{+}}\sum_{i:\,y_i=1} p_i
  \quad \text{(soft rate of correctly predicting present)},
  \label{eq:soft_tpr}
\end{equation}
\begin{equation}
  \widehat{\text{TNR}} = \frac{1}{N^{-}}\sum_{j:\,y_j=0} (1 - p_j)
  \quad \text{(soft rate of correctly predicting absent)},
  \label{eq:soft_tnr}
\end{equation}
and $N^{+}$, $N^{-}$ denote the number of positive (present) and
negative (absent) samples in the batch, respectively.

\subsection{Evaluation metrics}\label{sec:metrics}

Multiple evaluation metrics were employed to assess model performance, with metric selection guided by the practical implications of prediction errors in an operational setting. In the context of staff absenteeism, two types of misclassification are of particular interest: predicting an employee as present when they are actually absent (false positive), and predicting an employee as absent when they are actually present (false negative). 

From an operational perspective, false positives are more disruptive, as they may lead to insufficient staffing and require immediate managerial intervention to maintain service efficiency. Therefore, minimising false positives is the primary objective of this study. The false positive rate (FPR) is used to quantify this error, and specificity ($1-\text{FPR}$) is adopted as a key metric, where higher specificity indicates better control of false positives. Additionally, since recall is insensitive to false positives and standard accuracy is unreliable under class imbalance; therefore we use balanced accuracy. Accordingly, this study evaluates model performance using precision, recall, specificity, F1-score, and balanced accuracy, defined as follows:

\begin{itemize}

    \item \textbf{Precision:} Quantifies the proportion of predicted positive instances that are truly positive:
    	\begin{equation}
    	\text{Precision} = \frac{TP}{TP + FP}.
    	\end{equation}

    \item \textbf{Recall (Sensitivity or True Positive Rate - TPR):} Measures the proportion of actual positive instances correctly identified by the model:
    	\begin{equation}
    	\text{Recall} = \frac{TP}{TP + FN}.
    	\end{equation}
	
	\item \textbf{Specificity:} Represents the proportion of actual negative instances correctly classified:
	\begin{equation}
	\text{Specificity} = \frac{TN}{TN + FP}.
	\end{equation}

    \item \textbf{F1 Score:} The harmonic mean of precision and recall, providing a balanced measure of classification performance, particularly under class imbalance:
    	\begin{equation}
    	\text{F1} = 2 \times \frac{\text{Precision} \times \text{Recall}}{\text{Precision} + \text{Recall}} = \frac{2TP}{2TP + FP + FN}.
        \end{equation}

    \item \textbf{Balanced Accuracy:} Measures classification performance by equally weighting sensitivity and specificity, making it suitable for imbalanced datasets:
	\begin{equation}
	\text{Balanced Accuracy} = \frac{\text{Sensitivity} + \text{Specificity}}{2}.
	\end{equation}

\end{itemize}

\subsection{Training process}

Model training was conducted under a fixed set of hyper-parameters and optimisation settings. The learning rate was set to $10^{-3}$, and models were trained for a maximum of 300 epochs, as prior TSC research \parencite{taherkhani_deep_2023} suggests convergence typically occurs after more than 200 epochs. To mitigate overfitting, early stopping was applied based on validation performance, with training terminated if no improvement was observed for 10 consecutive epochs. A minimum improvement threshold of $10^{-4}$ in validation loss was used to determine convergence. The model parameters corresponding to the best validation performance were retained. For binary classification, the output of the sigmoid activation function was converted to class labels using a decision threshold of 0.5, providing a balanced decision boundary between classes \parencite{bishop2006prml}.

\subsection{Computational environment}

All experiments were conducted on a workstation running Windows 11 (64-bit), equipped with an Intel Core i9-14900K CPU, an NVIDIA GeForce RTX 4090 GPU, and 64~GB of system memory. Model development and training were performed using Visual Studio Code within a Python 3.10 environment. GPU acceleration was enabled via CUDA version 11.8, and all deep learning models were implemented using PyTorch version 2.2.0.

\section{Results}\label{sec:results}

\subsection{Batch size}\label{sec:batch_size}

\begin{figure}[h!]
	\centering
	\includegraphics[width=0.8\textwidth]{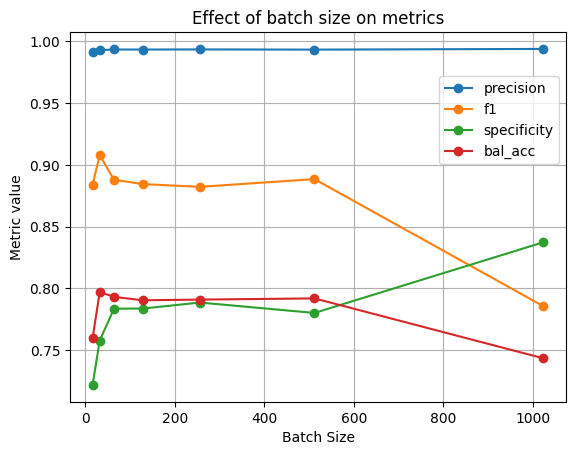}
	\caption{Effect of batch size on validation performance of the LSTM-FCN model trained without background information. Batch sizes range from $16$ to $1{,}024$ in doublings. All runs use window size $80$, output horizon $5$ days, G-Mean loss ($\text{eps} = 10^{-12}$, $\text{use\_log} = \text{True}$), learning rate $10^{-3}$, and early stopping with patience $10$. G-Mean loss has no $\alpha$ or $\gamma$ parameters; no loss function assumptions
    are made in this sweep.}
	\label{fig:batch_comparison}
\end{figure}

Figure~\ref{fig:batch_comparison} illustrates the effect of batch size on the LSTM-FCN model trained without background information. Batch sizes were swept across $\{16, 32, 64, 128, 256, 512, 1{,}024\}$ with all other settings fixed: window size $80$ days (an initial value used here; the window size sweep in Section~\ref{sec:window_size} confirms this choice), output horizon $5$ days, G-Mean loss function, learning rate $10^{-3}$, and early stopping with patience $10$ epochs. G-Mean loss is parameter-free with respect to class weighting, so it has no $\alpha$ or $\gamma$ parameters and no loss function assumptions are made in this sweep.

Both specificity and balanced accuracy are notably low for batch sizes below $64$ but improve substantially thereafter, stabilising from $64$ to $512$. Precision remains relatively stable across all batch sizes, while the F1 score stabilises from a batch size of $64$ onwards. A minimum batch size of $64$ is therefore recommended.

This behaviour is consistent with the self-correcting gradient property of G-Mean loss established in Proposition~\ref{prop:gmean}. At small batch sizes, the expected number of absent-class samples per batch is only $\bar{n}^{-} = B/(1+\rho) \approx B/43.31$, meaning that for $B = 16$, fewer than one absent-class sample is expected on average. When no absent-class samples appear in a batch, $\widehat{\text{TNR}}$ is undefined and the self-correcting mechanism cannot function, causing specificity to collapse. Larger batch sizes increase $\bar{n}^{-}$, stabilising the soft TNR estimate and enabling reliable gradient self-correction. From a computational perspective, larger batch sizes also improve training efficiency by reducing the number of parameter update steps per epoch~\parencite{Samuel2018Don,Priya2018Priya}. A batch size of $1{,}024$ achieves the highest specificity of all batch sizes tested, with an associated reduction in F1 score and balanced accuracy relative to the stable plateau observed between $64$ and $512$. Since specificity is the primary metric of interest in this study, a batch size of $1{,}024$ is selected for all subsequent experiments.

\subsection{Window size} \label{sec:window_size}

\begin{figure}[h!]
	\centering
	\includegraphics[width=1\textwidth]{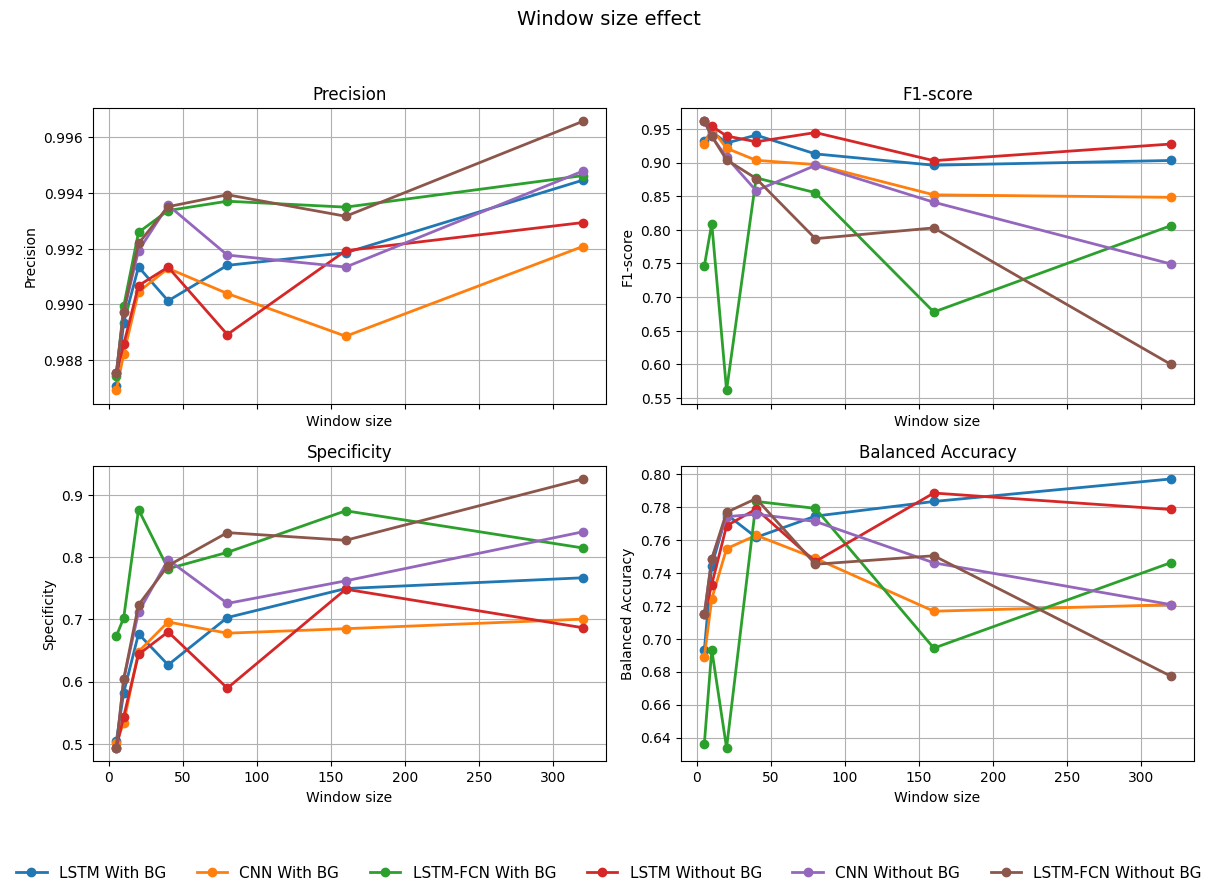}
	\caption{Effect of input window size on validation performance across all evaluated model architectures, with and without background information, for a 5-day prediction horizon. Precision, F1-score, specificity, and balanced accuracy are reported for window sizes ranging from 5 to 320 days. All models were trained using a batch size of $1{,}024$ (established in Section~\ref{sec:batch_size}), output horizon $5$ days, G-Mean loss ($\text{eps} = 10^{-12}$, $\text{use\_log} = \text{True}$), learning rate $10^{-3}$, and early stopping with patience $10$. G-Mean loss has no $\alpha$ or $\gamma$ parameters; no loss function assumptions are made in this sweep.}
	\label{fig:window_comparison}
\end{figure}

Figure~\ref{fig:window_comparison} presents the performance of all evaluated models across window sizes ranging from $5$ to $320$ days, with each step doubling the previous window length. All experiments used a batch size of $1{,}024$ (established in Section~\ref{sec:batch_size}), an output horizon of $5$ days, and the G-Mean loss function. G-Mean loss is parameter-free with respect to class weighting, so no loss function assumptions are made in this sweep. Results are assessed on the validation set.

\begin{table}[ht]
\centering
\begin{tabular}{lcccc}
\hline
\multicolumn{5}{c}{\textbf{Window size = 40}} \\
\hline
Model & Precision & F1-Score & Specificity & Balanced Accuracy \\
\hline
LSTM With BG & 0.990 & 0.941 & 0.627 & 0.761 \\
CNN With BG & 0.991 & 0.903 & 0.696 & 0.763 \\
LSTM-FCN With BG & 0.993 & 0.877 & 0.781 & 0.783 \\
LSTM Without BG & 0.991 & 0.931 & 0.680 & 0.779 \\
CNN Without BG & 0.994 & 0.858 & 0.796 & 0.775 \\
LSTM-FCN Without BG & 0.994 & 0.876 & 0.786 & 0.785 \\
\hline
\multicolumn{5}{c}{\textbf{Window size = 80}} \\
\hline
Model & Precision & F1-Score & Specificity & Balanced Accuracy \\
\hline
LSTM With BG & 0.991 & 0.913 & 0.703 & 0.774 \\
CNN With BG & 0.990 & 0.897 & 0.678 & 0.749 \\
LSTM-FCN With BG & \textbf{0.994} & 0.855 & \textbf{0.807} & 0.779 \\
LSTM Without BG & 0.989 & 0.945 & 0.589 & 0.747 \\
CNN Without BG & 0.992 & 0.896 & 0.725 & 0.771 \\
LSTM-FCN Without BG & \textbf{0.994} & 0.787 & \textbf{0.839} & 0.745 \\
\hline
\multicolumn{5}{c}{\textbf{Window size = 160}} \\
\hline
Model & Precision & F1-Score & Specificity & Balanced Accuracy \\
\hline
LSTM With BG & 0.992 & 0.896 & 0.749 & 0.783 \\
CNN With BG & 0.989 & 0.852 & 0.685 & 0.717 \\
LSTM-FCN With BG & 0.993 & 0.678 & 0.874 & 0.694 \\
LSTM Without BG & 0.992 & 0.903 & 0.748 & 0.788 \\
CNN Without BG & 0.991 & 0.841 & 0.762 & 0.746 \\
LSTM-FCN Without BG & 0.993 & 0.803 & 0.827 & 0.750 \\
\hline
\end{tabular}
\caption{Model performance at window sizes $40$, $80$, and $160$ days across all architectures, with and without background information. All models trained with batch size $1{,}024$, output horizon $5$ days, G-Mean loss (no $\alpha$ or $\gamma$), learning rate $10^{-3}$, early stopping patience $10$, evaluated on the validation set. Bold values indicate the highest specificity configurations at window size $80$.}
\label{tab:window_comparison}
\end{table}

From an architectural perspective, the LSTM-FCN model demonstrates superior performance in terms of precision and specificity, while maintaining competitive F1 score and balanced accuracy, regardless of whether background information is included. Given the emphasis on minimising false positive rates in this study, the LSTM-FCN architecture represents the most suitable choice among the evaluated models.

Across all architectures, models trained with and without background information exhibit broadly similar performance trends as the window size increases. Precision shows a modest upward trend with increasing window length. F1 score generally decreases as the window size grows, with the LSTM-FCN model trained with background information displaying greater variability. Specificity consistently improves with longer window sizes across models, indicating enhanced ability to correctly identify genuinely absent employees. Balanced accuracy increases with window size for the LSTM model throughout the examined range, while the CNN and LSTM-FCN models improve up to approximately $80$ days, after which performance begins to decline.

Table~\ref{tab:window_comparison} compares window sizes of $40$, $80$, and $160$ days. Increasing the window size to $160$ days leads to a notable decline in both F1-score and balanced accuracy for the LSTM-FCN architecture. For a 5-day prediction horizon, the optimal trade-off between performance and stability is achieved with window sizes between $40$ and $80$ days, where models maintain consistently strong performance across all evaluation metrics.

Overall, incorporating staff background information does not significantly improve performance, while the combination of the LSTM-FCN architecture and a window size of $40$ to $80$ days yields the most robust and reliable performance within the proposed TSC framework.

\subsection{Loss function comparison}

Table~\ref{tab:loss_comparison} presents the performance of the LSTM-FCN model on the validation set under a systematic sweep of BFL configurations and G-Mean loss, with window size $80$, output horizon $5$, and batch size $1{,}024$. We include one baseline configuration: BFL with $\alpha = 0.75$ and $\gamma = 2$. We include one baseline configuration: BFL with $\alpha = 0.75$ and $\gamma = 2$, following the standard focal loss parameterisation~\parencite{lin2017focal} and prior imbalanced classification studies~\parencite{bordia2024bonafidelegallens2024shared}. This baseline is included not as an experimental choice but as a reference point representing common practice, to establish how severely the standard parameterisation fails when the majority class is labelled positive. Beyond the baseline, the class-weighting parameter $\alpha$ was varied across four theoretically motivated values derived from the imbalance ratio $\rho = N^+/N^- = 42.31$ (computed from the training data); the analytically balanced value $\alpha^* = 1/(1+\rho) \approx 0.023$, and three values in the near-balanced range $\alpha \in \{0.05, 0.1, 0.2\}$. The focusing parameter $\gamma$ was varied across $\{0, 1, 2, 5\}$. All non-baseline values of $\alpha$ are grounded in $\rho$ alone, as derived in Section~\ref{sec:theoretical_analysis}.

\begin{table}[H]
  \centering
  \begin{tabular}{llcccc}
    \hline
    \textbf{Loss} & \textbf{Config} &
    \textbf{Precision} & \textbf{F1} &
    \textbf{Specificity} & \textbf{Bal.\ Acc.} \\
    \hline
    \multicolumn{6}{l}{\textit{Baseline (standard parameterisation from prior work)}} \\
    BFL & $\alpha=0.75$, $\gamma=2$ & 0.979 & 0.989 & 0.124 & 0.562 \\
    \hline
    \multicolumn{6}{l}{\textit{Theoretically motivated sweep ($\alpha$ derived from $\rho$)}} \\
    BFL & $\alpha^{*}$, $\gamma=0$ & \textbf{0.995} & 0.979 & 0.813 & \textbf{0.888} \\
    BFL & $\alpha^{*}$, $\gamma=1$ & 0.992 & 0.938 & 0.707 & 0.799 \\
    BFL & $\alpha^{*}$, $\gamma=2$ & 0.993 & 0.927 & 0.743 & 0.807 \\
    BFL & $\alpha^{*}$, $\gamma=5$ & 0.992 & 0.915 & 0.725 & 0.787 \\
    \hline
    BFL & $\alpha=0.05$, $\gamma=0$ & 0.991 & 0.960 & 0.648 & 0.789 \\
    BFL & $\alpha=0.05$, $\gamma=1$ & 0.992 & 0.952 & 0.682 & 0.799 \\
    BFL & $\alpha=0.05$, $\gamma=2$ & 0.995 & 0.982 & 0.795 & 0.882 \\
    BFL & $\alpha=0.05$, $\gamma=5$ & 0.987 & 0.962 & 0.516 & 0.727 \\
    \hline
    BFL & $\alpha=0.10$, $\gamma=0$ & 0.992 & 0.983 & 0.672 & 0.823 \\
    BFL & $\alpha=0.10$, $\gamma=1$ & 0.994 & 0.992 & 0.764 & 0.877 \\
    BFL & $\alpha=0.10$, $\gamma=2$ & 0.995 & 0.988 & 0.785 & 0.884 \\
    BFL & $\alpha=0.10$, $\gamma=5$ & 0.984 & 0.982 & 0.371 & 0.675 \\
    \hline
    BFL & $\alpha=0.20$, $\gamma=0$ & 0.985 & 0.987 & 0.401 & 0.695 \\
    BFL & $\alpha=0.20$, $\gamma=1$ & 0.986 & 0.986 & 0.414 & 0.700 \\
    BFL & $\alpha=0.20$, $\gamma=2$ & 0.990 & 0.991 & 0.581 & 0.787 \\
    BFL & $\alpha=0.20$, $\gamma=5$ & 0.987 & 0.985 & 0.472 & 0.728 \\
    \hline
    G-Mean & --- & 0.994 & 0.766 & \textbf{0.844} & 0.734 \\
    \hline
  \end{tabular}
  \caption{Performance of BFL across configurations and G-Mean loss, evaluated on the validation set. Positive = present ($y=1$, majority); negative = absent ($y=0$, minority). Specificity measures the correct identification of absent employees. The baseline row ($\alpha=0.75$, $\gamma=2$) follows the standard focal loss parameterisation~\parencite{lin2017focal}. $\alpha^* = 1/(1+\rho) \approx 0.023$ is the analytically balanced value derived from $\rho = 42.31$ (training data). Bold values indicate the best result in each column.}
    \label{tab:loss_comparison}
\end{table}

There are a few findings from the Table~\ref{tab:loss_comparison}. The standard parameterisation fails for this problem. The baseline configuration ($\alpha = 0.75, \gamma = 2$), following \textcite{lin2017focal}, achieves specificity of only $0.124$. The model collapses to predicting presence for all employees, correctly identifying fewer than one in eight genuinely absent staff. This failure is not incidental; as the theoretical analysis in Section~\ref{sec:theoretical_analysis} shows, $\alpha = 0.75$ generates a gradient ratio of $\rho \times 0.75/0.25 = 126.9$ at initialisation under the present $=$ positive convention, amplifying rather than mitigating majority class dominance. The standard parameterisation was designed for settings where the positive class is the minority; it is systematically miscalibrated when applied to the reverse convention. 

The dominant factor is $\alpha$. Specificity increases monotonically as $\alpha$ decreases from $0.75$ towards $\alpha^*$, consistent with the theoretical prediction that gradient balance requires $\alpha^* = 1/(1+\rho)$. At $\alpha = 0.20$ (gradient ratio $\approx 10.6$ at initialisation), specificity collapses to $0.40--0.58$ regardless of $\gamma$. At $\alpha^*$ (gradient ratio $= 1$), specificity reaches $0.71--0.81$ depending on $\gamma$. When $\alpha$ is correctly calibrated, increasing $\gamma$ reduces both specificity and balanced accuracy consistently. At $\alpha^* = 0.023$, the best configuration is $\gamma = 0$ (no focusing), achieving the highest balanced accuracy at $\alpha^*$. This is consistent with the theoretical analysis; when the gradient is already balanced by $\alpha^*$, the reactive focusing mechanism introduced by $\gamma$ disrupts the equilibrium rather than improving it.

BFL with $\alpha^*$ and $\gamma=0$ is competitive with G-Mean, but requires calibration. The BFL configuration ($\alpha^*, \gamma=0$) achieves specificity $0.813$ and balanced accuracy $0.888$, compared with G-Mean's specificity $0.844$ and balanced accuracy $0.734$. The improvement over the baseline is dramatic; specificity increases from $0.124$ to $0.813$ simply by replacing $\alpha=0.75$ with $\alpha^*=0.023$. Crucially, however, BFL requires knowing $\rho$ in advance to compute $\alpha^*$, while G-Mean's self-correcting gradient structure adapts automatically to the class imbalance without any parameter calibration.

\subsection{Theoretical analysis of the specificity gap}\label{sec:theoretical_analysis}

The results in Table~\ref{tab:loss_comparison} present two striking contrasts. First, the baseline configuration ($\alpha = 0.75$, $\gamma = 2$) achieves specificity of only $0.124$, whilst the analytically derived $\alpha^{*}$ with $\gamma = 0$ achieves $0.813$, a difference of $0.689$ from a single parameter change. Second, G-Mean loss achieves specificity $0.844$ without any parameter calibration at all. We now provide the theoretical justification for both contrasts, showing that they follow directly from the gradient dynamics of BFL and G-Mean loss, with $\rho = 42.31$ computed from the training data as the only fixed quantity. 

\subsubsection{Gradient dynamics of binary focal loss}

For a model with sigmoid output $p = \sigma(z)$ representing the predicted probability of presence ($y=1$), the BFL gradient with respect to the pre-sigmoid logit $z$ is obtained via the chain rule ($\partial p / \partial z = p(1-p)$):

\begin{align}
  \frac{\partial \text{FL}}{\partial z}\bigg|_{y=1\,(\text{present})} &=
  \alpha(1-p)^{\gamma-1}p\bigl[\gamma(1-p)\log p + p - 1\bigr],
  \label{eq:bfl_grad_pos}\\
  \frac{\partial \text{FL}}{\partial z}\bigg|_{y=0\,(\text{absent})} &=
  -(1-\alpha)p^{\gamma-1}(1-p)\bigl[\gamma p \log(1-p) - p\bigr].
  \label{eq:bfl_grad_neg}
\end{align}

Let $g_1$ and $g_0$ denote the signed bracket terms from Equations~\ref{eq:bfl_grad_pos} and~\ref{eq:bfl_grad_neg} respectively:

\begin{align}
  g_1 &= (1-p)^{\gamma-1}p\bigl[\gamma(1-p)\log p + p - 1\bigr] < 0,
  \label{eq:g1}\\
  g_0 &= p^{\gamma-1}(1-p)\bigl[\gamma p\log(1-p) - p\bigr] < 0,
  \label{eq:g0}
\end{align}

\noindent so that the gradient of a single present sample is $\alpha g_1$ (negative, since the loss decreases as $p$ increases towards $1$) and the gradient of a single absent sample is $-(1-\alpha)g_0$ (positive, since the loss increases as $p$ increases away from $0$). The gradient magnitudes are $\alpha |g_1|$ and $(1-\alpha) |g_0|$ respectively. Since $g_1<0$ and $g_0<0$, the ratio $g_1/g_0 > 0$, and $g_1 \approx g_0$ when $p \approx 0.5$ by symmetry of the two loss terms about the decision boundary.

In a training batch of size $B$, the expected counts are $n^+ = B\rho/(1+\rho)$ (present, majority) and $n^- = B/(1+\rho)$ (absent, minority). The total gradient contribution from each class is the product of its sample count and per-sample gradient magnitude. The ratio of total gradient magnitudes from the present class to the absent class is therefore:

\begin{equation}
  \frac{\|\nabla_\text{present}\|}{\|\nabla_\text{absent}\|} \approx
  \underbrace{\rho}_{\substack{\text{count} \\ \text{ratio}}}
  \cdot\,
  \underbrace{\frac{\alpha}{1-\alpha}}_{\substack{\text{weight} \\ \text{ratio}}}
  \cdot\,
  \underbrace{\frac{g_1}{g_0}}_{\substack{\text{focusing} \\ \text{ratio}}}.
  \label{eq:bfl_ratio}
\end{equation}

The batch size $B$ cancels in this ratio, so the result is independent of batch size. Early in training, when $p\approx 0.5$, we have $g_1/g_0 \approx 1$ as established above, so~\eqref{eq:bfl_ratio} reduces to:

\begin{equation}
  \frac{\|\nabla_\text{present}\|}{\|\nabla_\text{absent}\|}
  \;\approx\; \rho \cdot \frac{\alpha}{1-\alpha}.
  \label{eq:bfl_ratio_early}
\end{equation}

The focusing parameter $\gamma$ has dropped out entirely. This is an exact result at $p=-0.5$; the gradient imbalance at initialisation is determined solely by $\rho$ (from the data) and $\alpha$ (a parameter choice). The gradient balanced value of $\alpha$ is derived in Proposition~\ref{prop:bfl_alpha} below as $\alpha^* = 1/(a+\rho)$. For any $\alpha > \alpha^*$, the present class dominates the gradient signal from the first update, and $\gamma$ cannot correct this because it has no effect at $p=0.5$.

To see why the baseline fails, substitute $\rho = 42.31$ and $\alpha=0.75$;

\begin{equation}
    \frac{\|\nabla_\text{present}\|}{\|\nabla_\text{absent}\|}
  \;\approx\; 42.31 \times \frac{0.75}{0.25} \approx 126.9.
  \label{eq:bfl_ratio_baseline}
\end{equation}

The present class contributes approximately $127$ times more gradient signal than the absent class at initialisation. This is a compounding of two effects: the class count ratio $\rho = 42.31$ and the weight ratio $\alpha/(1-\alpha) = 3$, the latter arising because $\alpha = 0.75$ places \emph{greater} weight on the positive (present, majority) class rather than the minority. The standard parameterisation was designed for settings where $y=1$ is the minority class; applied in reverse, it actively amplifies majority-class dominance. The analytically correct value $\alpha^{*} = 1/(1+\rho) \approx 0.023$ brings the ratio to exactly $1$, as proved in Proposition~\ref{prop:bfl_alpha}.

The focusing parameter $\gamma$ only suppresses present-class gradient contributions \emph{after} those samples become well-classified (i.e.\ after $p \to 1$ for present samples and $(1-p)^\gamma \to 0$). This mechanism is inherently \emph{reactive}: it can only reduce majority-class dominance after the model has already learned to confidently predict presence, a state that itself requires majority-class gradients to dominate the early training trajectory.

\begin{proposition}\label{prop:bfl_alpha}
    For BFL to achieve balanced gradient contributions from present and absent classes $\|\nabla_\text{present}\| \approx \|\nabla_\text{absent}\|$ during early training under imbalance ratio $\rho = N^+/N^-$, the class-weighting parameter must satisfy:

    \begin{equation}
        \alpha^* = \frac{1}{1+\rho}.
    \end{equation}

    For any $\alpha > \alpha^*$, the present class dominates the gradient signal at initialisation by a factor of $\rho\alpha/(1-\alpha) > 1$. At $\rho = 42.31$ (from the training data), gradient balance requires $\alpha^* \approx 0.023$.
    
\end{proposition}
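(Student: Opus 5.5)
The plan is to start from the per-sample logit gradients in Equations~\eqref{eq:bfl_grad_pos} and~\eqref{eq:bfl_grad_neg} and evaluate them exactly at initialisation, $z=0$, i.e.\ $p=\sigma(0)=1/2$. Substituting $p=1/2$ into the definitions of $g_1$ and $g_0$ in~\eqref{eq:g1}--\eqref{eq:g0}, both brackets become identical. Indeed, $\gamma(1-p)\log p + p - 1$ and $\gamma p\log(1-p) - p$ both reduce to $\tfrac{\gamma}{2}\log\tfrac12-\tfrac12$. The prefactors $(1-p)^{\gamma-1}p$ and $p^{\gamma-1}(1-p)$ also coincide, both equal to $2^{-\gamma}$. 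Hence $g_1=g_0<0$ exactly at $p=1/2$ for every $\gamma\ge 0$, and the focusing ratio in~\eqref{eq:bfl_ratio} equals $1$. This makes the reduction to~\eqref{eq:bfl_ratio_early} an identity at $p=1/2$ rather than an approximation. By continuity of $g_1/g_0$ in $p$, it remains approximately $1$ in a neighbourhood, which covers "early training".

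Next I will make the aggregation step precise. The batch gradient with respect to the output logits splits as a sum over present samples, each of magnitude $\alpha|g_1|$, plus a sum over absent samples, each of magnitude $(1-\alpha)|g_0|$. With $n^+/n^-=\rho$ in expectation, the ratio of total magnitudes is $\rho\,\alpha|g_1|/\bigl((1-\alpha)|g_0|\bigr)$, and $B$ cancels. I will state explicitly that "gradient contribution" is measured at the logit level (before back-propagation through the network), since at initialisation all samples share $p\approx 1/2$. This is what justifies comparing magnitudes rather than full parameter-space vectors. Setting the ratio $R(\alpha)=\rho\alpha/(1-\alpha)$ equal to $1$ gives $\rho\alpha=1-\alpha$, so $\alpha^*=1/(1+\rho)$.

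For the dominance claim, I will observe that $R(\alpha)$ is strictly increasing on $(0,1)$, since its derivative is $\rho/(1-\alpha)^2>0$. Therefore $\alpha>\alpha^*$ implies $R(\alpha)>R(\alpha^*)=1$, so the present class dominates by exactly the factor $\rho\alpha/(1-\alpha)$ at $p=1/2$. Finally, substituting $\rho=42.31$ gives $\alpha^*=1/43.31\approx 0.0231$.

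The main obstacle is interpretive rather than computational. The proposition speaks of balance "during early training", but the exact statement holds only at $p=1/2$. Away from it, $g_1/g_0$ depends on $\gamma$ and on the per-sample predictions, and parameter-space gradients need not add in magnitude. I would handle this by stating the result as exact at initialisation and approximate in a neighbourhood, bounding $|g_1/g_0-1|$ via a first-order Taylor expansion around $p=1/2$. I would not attempt any claim about later training dynamics.
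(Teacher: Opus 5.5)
Your proposal is correct and follows the same route as the paper, which simply sets the early-training ratio $\rho\alpha/(1-\alpha)$ equal to one. You additionally make rigorous the two steps the paper leaves implicit: that $g_1=g_0=2^{-\gamma}\bigl(\tfrac{\gamma}{2}\log\tfrac12-\tfrac12\bigr)$ exactly at $p=\tfrac12$ for every $\gamma$, and that strict monotonicity of $\rho\alpha/(1-\alpha)$ gives the dominance claim for $\alpha>\alpha^*$. One caution for your optional Taylor bound: the displayed gradients \eqref{eq:bfl_grad_pos}--\eqref{eq:bfl_grad_neg} agree with the true logit derivatives $\alpha(1-p)^{\gamma}\bigl[\gamma p\log p-(1-p)\bigr]$ and $-(1-\alpha)p^{\gamma}\bigl[\gamma(1-p)\log(1-p)-p\bigr]$ only at $p=\tfrac12$ (for $\gamma=0$ the former reduces to $-\alpha p$ instead of $-\alpha(1-p)$), so any expansion of $g_1/g_0$ away from $p=\tfrac12$ must be built from the correct expressions, although nothing in the proposition itself depends on this.
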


\begin{proof}
    Setting~\eqref{eq:bfl_ratio_early} equal to unity and solving for $\alpha$ gives $1 = \rho\alpha/(1-\alpha)$, hence $\alpha = 1/(1+\rho)$.
\end{proof}

Proposition~\ref{prop:bfl_alpha} has a useful corollary: for a practitioner targeting a gradient ratio of $r$, the required $\alpha$ is:

\begin{equation}
    \alpha = \frac{r}{\rho+r}. 
\end{equation}

Since $\rho$ is known exactly from the data before training begins, $\alpha$ can be set analytically without grid search. The sweep in Table~\ref{tab:loss_comparison} confirms this empirically; specificity increases monotonically as $\alpha$ decreases towards $\alpha^*$.

\subsubsection{Self-correcting gradient structure of G-Mean loss} \label{self-correcting G-Mean}

The gradients of the G-Mean loss (Equation~\ref{eq:gmean}) with respect to individual predictions are obtained by differentiating $\mathcal{L}_{\text{GM}} = -\frac{1}{2}\log(\widehat{\text{TPR}} \cdot \widehat{\text{TNR}})$ through the soft TPR and TNR approximation (Equations~\ref{eq:soft_tpr} and~\ref{eq:soft_tnr}):

\begin{align}
  \frac{\partial \mathcal{L}_{\text{GM}}}{\partial p_i}
  \bigg|_{y_i=1\,(\text{present})}
  &= -\frac{1}{2} \cdot \frac{1}{\widehat{\text{TPR}}}
     \cdot \frac{\partial \widehat{\text{TPR}}}{\partial p_i}
   = -\frac{1}{2} \cdot \frac{1}{\widehat{\text{TPR}}}
     \cdot \frac{1}{N^{+}}
   = -\frac{1}{2N^{+}\,\widehat{\text{TPR}}},
  \label{eq:gmean_grad_pos} \\
  \frac{\partial \mathcal{L}_{\text{GM}}}{\partial p_j}
  \bigg|_{y_j=0\,(\text{absent})}
  &= -\frac{1}{2} \cdot \frac{1}{\widehat{\text{TNR}}}
     \cdot \frac{\partial \widehat{\text{TNR}}}{\partial p_j}
   = -\frac{1}{2} \cdot \frac{1}{\widehat{\text{TNR}}}
     \cdot \frac{-1}{N^{-}}
   = -\frac{1}{2N^{-}\,\widehat{\text{TNR}}}.
  \label{eq:gmean_grad_neg}
\end{align}

Summing over all present and absent samples in the batch, the $N^{\pm}$ terms cancel, yielding the ratio of total gradient contributions from present to absent employees:

\begin{equation}
  \frac{\|\nabla_\text{present, total}\|}{\|\nabla_\text{absent, total}\|}
  = \frac{N^{+} \cdot \dfrac{1}{2N^{+}\,\widehat{\text{TPR}}}}
         {N^{-} \cdot \dfrac{1}{2N^{-}\,\widehat{\text{TNR}}}}
  = \frac{\widehat{\text{TNR}}}{\widehat{\text{TPR}}}.
  \label{eq:gmean_ratio}
\end{equation}

\begin{proposition}\label{prop:gmean}
    Under G-Mean loss, the ratio of total gradient contributions from present to absent employees equals $\widehat{TNR}/\widehat{TPR}$, which is independent of the imbalance ratio $\rho$. Furthermore, the gradient structure is self-correcting against majority class collapse. When the model collapses towards predicting present for all employees ($\widehat{TPR} \to 1, \widehat{TNR} \to 0)$,

    \begin{equation}
        \frac{\|\nabla_\text{present, total}\|}{\|\nabla_\text{absent, total}\|}
    = \frac{\widehat{\text{TNR}}}{\widehat{\text{TPR}}} \to 0,
    \end{equation}

so the absent class gradient dominates, providing a restoring force that prevents sustained collapse and preserves specificity regardless of $\rho$.
\end{proposition}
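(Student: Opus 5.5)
The plan is to build directly on the per-sample gradient computations in Equations~\ref{eq:gmean_grad_pos} and~\ref{eq:gmean_grad_neg}. I will work with the log form $\mathcal{L}_{\text{GM}} = -\frac{1}{2}\log(\widehat{\text{TPR}}\cdot\widehat{\text{TNR}})$, which is the implemented variant and is a monotone transform of Equation~\ref{eq:gmean}. I will also assume the batch contains $N^{+}\ge 1$ and $N^{-}\ge 1$, so that both soft rates are well defined and strictly positive; this holds because the sigmoid gives $p_i\in(0,1)$. Under this convention every present sample receives the identical per-sample gradient magnitude $1/(2N^{+}\widehat{\text{TPR}})$, and every absent sample receives $1/(2N^{-}\widehat{\text{TNR}})$.

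\textbf{Step 1: the ratio.} Summing the per-sample magnitudes over each class gives total magnitudes $1/(2\widehat{\text{TPR}})$ and $1/(2\widehat{\text{TNR}})$. The counts $N^{\pm}$ cancel, and their quotient is exactly Equation~\ref{eq:gmean_ratio}, namely $\widehat{\text{TNR}}/\widehat{\text{TPR}}$.

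\textbf{Step 2: independence from $\rho$.} The ratio $\widehat{\text{TNR}}/\widehat{\text{TPR}}$ depends only on the within-class averages of the predictions. The imbalance ratio $\rho=N^{+}/N^{-}$ enters only through the normalisations in Equations~\ref{eq:soft_tpr} and~\ref{eq:soft_tnr}, and those are exactly what cancel in Step 1. So for fixed class-conditional prediction profiles the ratio is invariant under rescaling the class counts. I will state the invariance in this sense: the ratio is a function of the prediction values alone. I will not claim it holds uniformly over training trajectories, since the trajectory itself may depend on $\rho$.

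\textbf{Step 3: self-correction.} Take a sequence of prediction configurations with $\widehat{\text{TPR}}\to 1$ and $\widehat{\text{TNR}}\to 0^{+}$. The ratio tends to $0/1=0$, so the absent-class total gradient dominates. Moreover, the absent-class total magnitude $1/(2\widehat{\text{TNR}})$ diverges. This is the restoring force: the descent direction for each absent sample decreases $p_j$. This follows from the sign in Equation~\ref{eq:gmean_grad_neg}, since the derivative with respect to $p_j$ is positive and descent therefore lowers $p_j$, which raises $\widehat{\text{TNR}}$.

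\textbf{Main obstacle.} The hard part is making precise the phrase ``prevents sustained collapse''. The ratio statement concerns gradients with respect to the outputs $p_i$. Turning it into a dynamical guarantee on network parameters requires the chain rule through $\partial p/\partial z=p(1-p)$ and through the shared parameters. The factor $p_j(1-p_j)$ also vanishes when $p_j\to1$, which is exactly the collapsed regime. My first attempt would be to compute the logit-level gradient for absent sample $j$:
\[
\frac{p_j(1-p_j)}{2N^{-}\widehat{\text{TNR}}}.
\]
I would then note that if all absent $p_j\approx 1-\varepsilon$, then $\widehat{\text{TNR}}\approx\varepsilon$, and the expression is $\approx 1/(2N^{-})$. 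This stays bounded away from zero, unlike the BFL case. Failing a full parameter-level argument, I would present the proposition as a statement about output-space (and logit-space) gradient magnitudes, which is what the displayed limit asserts.
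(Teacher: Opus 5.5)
Your Steps 1--3 match the paper's proof, which likewise reads off from Equation~\eqref{eq:gmean_ratio} that $\widehat{\text{TNR}}/\widehat{\text{TPR}}$ contains no $N^{+}$, $N^{-}$ or $\rho$ and tends to $0$ under collapse; your extra caveats (both classes present in the batch, invariance only for fixed prediction profiles, and the logit-level check that each absent-sample gradient stays near $1/(2N^{-})$ rather than diverging) are correct refinements the paper does not make. Two small cautions: the absent-sample derivative is indeed $+1/(2N^{-}\widehat{\text{TNR}})$, but the final expression printed in Equation~\eqref{eq:gmean_grad_neg} has the opposite sign, so rely on its middle expression; and your aside ``unlike the BFL case'' fails for the correctly computed focal-loss gradient, whose absent-sample logit derivative $-(1-\alpha)p^{\gamma}\bigl[\gamma(1-p)\log(1-p)-p\bigr]$ tends to $1-\alpha$ as $p\to 1$, so drop that comparison.
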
.

\begin{proof}
  The expression $\widehat{\text{TNR}}/\widehat{\text{TPR}}$ in equation~\eqref{eq:gmean_ratio} contains no dependence on $N^{+}$, $N^{-}$, or $\rho$. As $\widehat{\text{TPR}} \to 1$ and $\widehat{\text{TNR}} \to 0$ under majority-class collapse, the ratio tends to $0$, so the normalised gradient from each absent sample diverges relative to each present sample, amplifying the absent-class signal.
\end{proof}

In contrast to BFL, G-Mean loss exhibits a self-correcting gradient structure: the closer the model is to predicting present for everyone, the stronger the relative gradient signal from the absent (minority) class. This property is entirely absent from BFL, whose gradient ratio~\eqref{eq:bfl_ratio_early} depends directly on $\rho$ and is compounded by the $\alpha/(1-\alpha)$ weighting term, providing no proactive protection against the early-training collapse observed in Table~\ref{tab:loss_comparison}.

\subsubsection{Implications for loss function selection}

The analyses above provide a principled explanation for all findings in Table~\ref{tab:loss_comparison}. The baseline configuration ($\alpha = 0.75$, $\gamma = 2$) generates a gradient ratio of approximately $126.9$ at initialisation (equation~\eqref{eq:bfl_ratio_baseline}), driving the model to predict presence for almost all employees and yielding specificity $0.124$. This failure is structural, not incidental: the standard parameterisation is miscalibrated for any setting where the positive class is the majority. For any $\alpha > \alpha^{*} = 1/(1+\rho)$, majority-class gradient dominance occurs at initialisation and $\gamma$ cannot prevent it, because $\gamma$ has exactly zero effect
at $p = 0.5$.

The sweep confirms these predictions quantitatively. Specificity increases monotonically as $\alpha$ decreases from $0.75$ towards $\alpha^{*}$, and $\gamma = 0$ is optimal at $\alpha^{*}$ because the gradient is already balanced and the focusing mechanism only introduces reactive disruption. The practical guidance is;

\begin{itemize}
  \item If $\rho$ is known and stable, BFL with $\alpha^{*} = 1/(1+\rho)$ and $\gamma = 0$ is a viable option, achieving specificity $0.813$ and balanced accuracy $0.888$.
  \item The standard parameterisation ($\alpha = 0.75$, $\gamma = 2$) should not be used when the positive class is the majority, regardless of $\rho$. The gradient ratio formula $\rho\alpha/(1-\alpha)$ quantifies the miscalibration exactly.
  \item G-Mean loss achieves specificity $0.844$ without any parameter calibration, making it the preferable default when $\rho$ is unknown or when the practitioner wishes to avoid manual calibration entirely.
\end{itemize}

\subsubsection{Empirical verification of gradient dynamics}

\begin{figure}[H]
  \centering
  \includegraphics[width=1\textwidth]{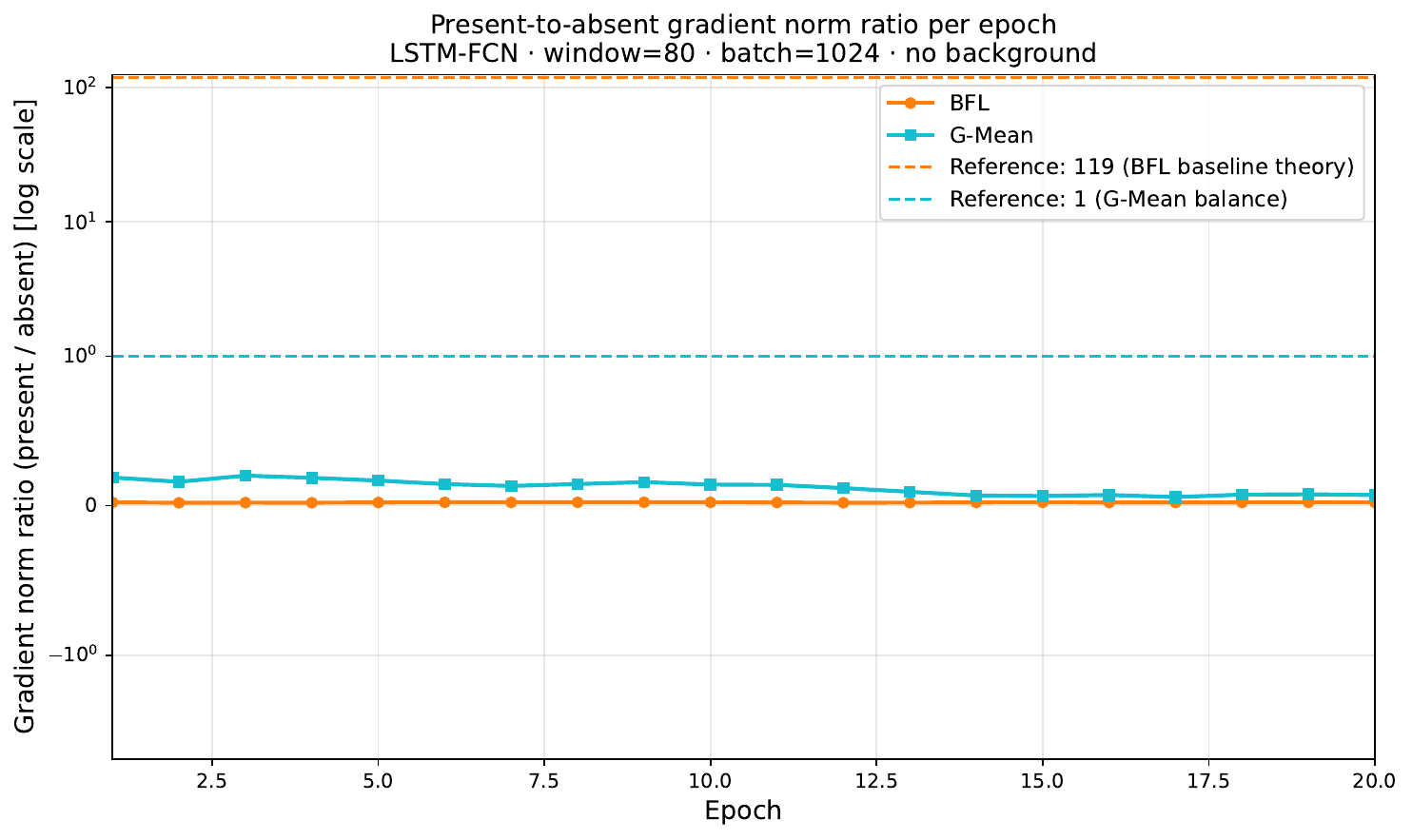}
  \caption{Present-to-absent gradient norm ratio (log scale) per epoch under BFL ($\alpha=0.75$, $\gamma=2$, baseline parameterisation) and G-Mean loss, computed on the LSTM-FCN model with window size $80$, batch size $1{,}024$, without background information. Both models are initialised from the same random seed. The orange colour dashed reference line shows the theoretical prediction of $\rho\alpha/(1-\alpha) \approx 119$ for the baseline BFL configuration at $p=0.5$, where $\rho=39.5$ is computed from the windowed training sample after sliding window extraction and user-level splitting. The full dataset gives $\rho=42.31$ and ratio $\approx 126.9$; the difference reflects the sliding window extraction procedure and does not affect any conclusion. The cyan colour dashed line indicates gradient balance (ration $=1$). Both models operate well below the theoretical prediction throughout training, confirming that bath normalisation in the FCN branch substantially moderates the scalar logit-level gradient imbalance. BFL remains near zero throughout, while G-MEan maintains a slightly higher ratio, consistent with the self-correcting property of Proposition~\ref{prop:gmean}}
  \label{fig:gradient_ratio}
\end{figure}

\begin{figure}[H]
  \centering
  \includegraphics[width=1\textwidth]{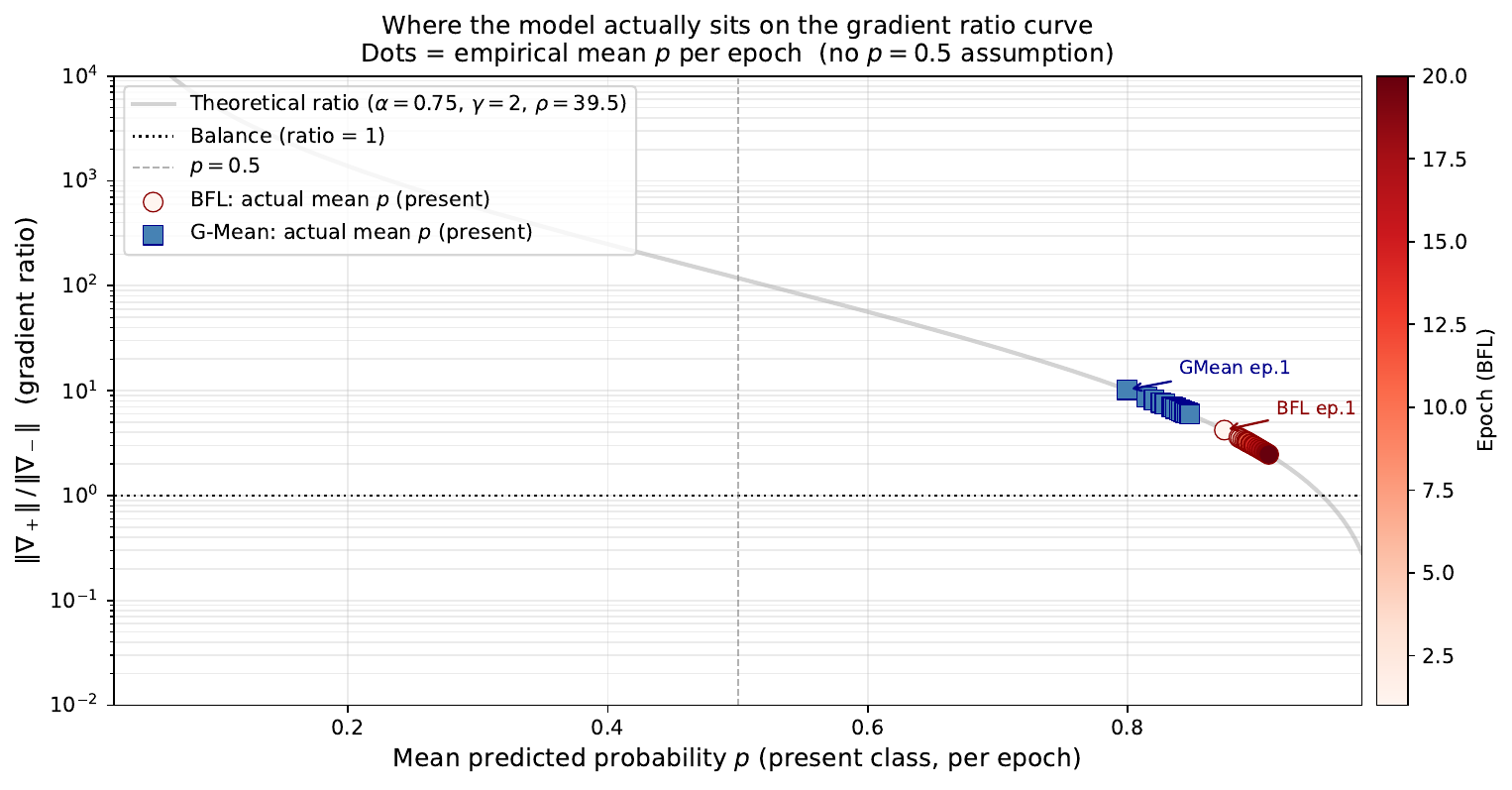}
  \caption{Empirical mean predicted probability $p$ per epoch for the present class, plotted against the theoretical gradient ratio curve $\rho\alpha/(1-\alpha) \cdot g_1/g_0$ for the baseline BFL configuration ($\alpha=0.75$, $\gamma=2$, $\rho=39.5$ from the training partition). Dots represent the actual mean $p$ at each training epoch; colour indicates epoch number (light to dark). Both BFL and G-Mean operate in the range $p \approx 0.8$--$0.87$ throughout training, far to the right of the $p=0.5$ initialisation point. This confirms that the model crosses $p = 0.5$ within the first few training batches, before the epoch-level mean is recorded. At $p > 0.5$, the theoretical gradient ratio is already well below the $p=0.5$ prediction, and both models experience present-class gradient suppression rather than dominance at the epoch level.}
  \label{fig:p_on_ratio_curve}
\end{figure}

Figure~\ref{fig:gradient_ratio} presents the present-to absent gradient norm ratio over the first $20$ training epochs under both loss functions, measured on the LSTM- FCN model under identical initialisations. The BFL run uses the baseline parameterisation ($\alpha=0.75, \gamma=2)$ to demonstrate the gradient dynamics that produce the specificity collapse of $0.124$ observed in Table~\ref{tab:loss_comparison}. Three phenomena are visible, each with a theoretical interpretation. 

\begin{itemize}
    \item[I)] Batch normalisation moderates the theoretical prediction. The scalar logit-level analysis of Section~\ref{sec:theoretical_analysis} predicts aBFL gradient ratio of $\rho\alpha/(1-\alpha) \approx 119$ at $p=0.5$ (training partition $\rho = 39.5$). The observed ratios for both models are substantially below this throughout all $20$ epochs. This is attributable to the batch normalisation layers in the FCN branch, which standardise feature activations across all samples in the batch jointly:

    \begin{equation}
        \hat{x} = \frac{x-\mu_{\text{batch}}}{\sqrt{\sigma_{batch}^2} + \varepsilon},
    \end{equation}

    acting as an adaptive gradient smoother during backpropagation. The scalar analysis should therefore be understood as an upper bound on the gradient imbalance in architectures that include batch normalisation.

    \item[II)] Both models operate far from $p=0.5$. Figure~\ref{fig:p_on_ratio_curve}confirms both models already have man $p \approx 0.8$--$0.87$ for present samples. The model crosses $p=0.5$ within the first few training batches, before the epoch-level mean is recorded. At $p>0.5$, the focusing term $1-p^\gamma$ suppresses present-class gradients aggressively, driving the gradient ratio well below $1$ for both models. This is not a correction of the class imbalance; it is reactive over-suppression occurring because the model has already committed to predicting presence for most employees.

    \item[III)] BFL collapses specificity while G-Mean maintains it. Despite both models showing similar gradient ratios at the epoch level, their specificity outcomes differ dramatically; BFL achieves specificity $0.124$ while G-Mean achieves $0.844$. The p-distribution figures (Figures~\ref{fig:p_mean_trajectory}--\ref{fig:p_hist_gmean}) explains the reasons. Under BFL, the absent-class mean $p$ starts at $0.775$ at epoch $1$ and declines to $0.62$ by epoch $20$, remaining above the decision boundary of $0.5$ throughout. Under G-Mean, the absent-class mean $p$ drops immediately to $0.22$ at epoch $1$ and stabilises at $0.185$ by epoch $20$. G-Mean's self-correcting gradient structure (Proposition~\ref{prop:gmean}) drives absent-class predictions below $0.5$ from the very first epoch, while BFL cannot achieve this despite the apparent gradient  suppression.

\end{itemize}

\subsubsection{Theoretical analysis of class weighting parameter ($\alpha$) and focusing parameter ($\gamma$)}

\begin{figure}[H]
  \centering
  \includegraphics[width=\textwidth]{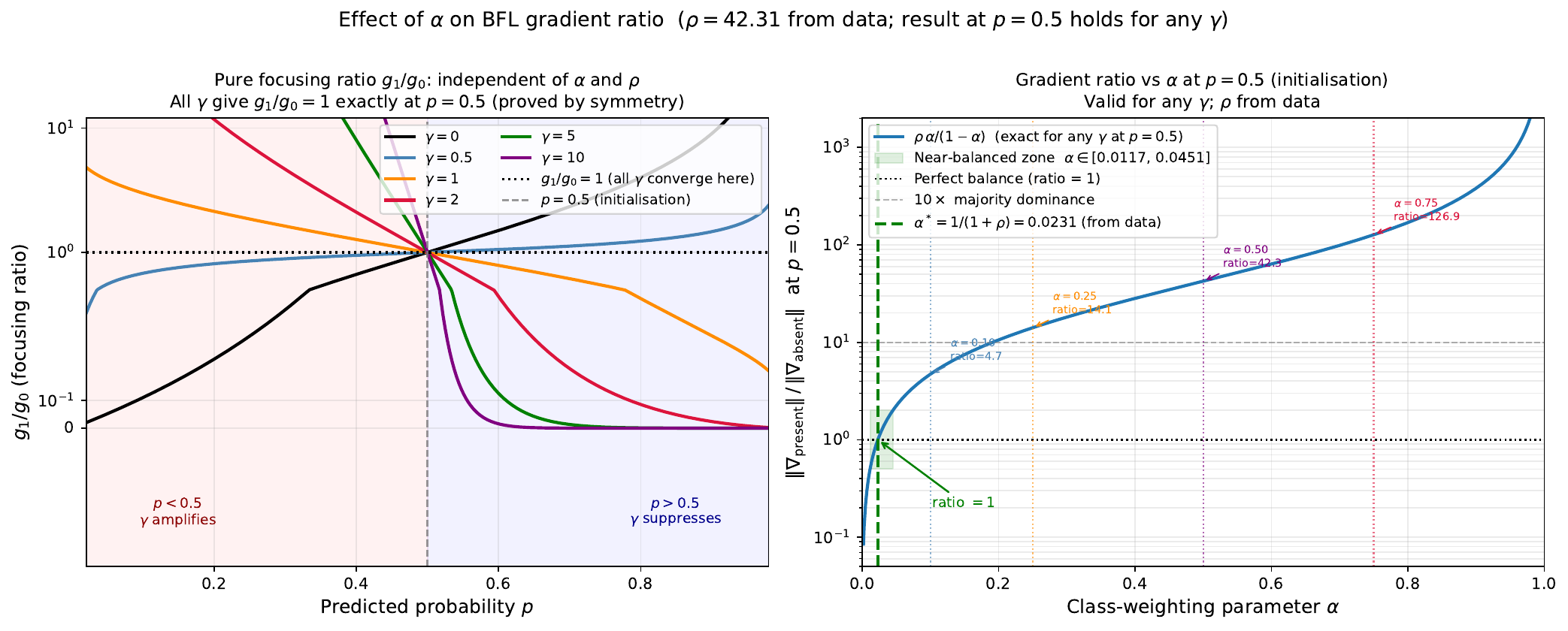}
  \caption{Effect of $\alpha$ on the BFL gradient ratio ($\rho = 42.31$ computed from data; result at $p = 0.5$ holds for any $\gamma$). \textit{Left:} Pure focusing ratio $g_1/g_0$ across $p$ for all $\gamma$ values, independent of $\alpha$ and $\rho$. All curves converge to $g_1/g_0 = 1$ exactly at $p = 0.5$, proving by symmetry that $\gamma$ has zero effect at initialisation. When $p < 0.5$ (model wrong about present employees), $\gamma$ amplifies majority dominance; when $p > 0.5$ (model correct), $\gamma$ suppresses it reactively. \textit{Right:} Early-training gradient ratio $\rho\alpha/(1-\alpha)$ as a function of $\alpha$, valid for any $\gamma$ at $p = 0.5$. The balanced value $\alpha^* = 1/(1+\rho) \approx 0.023$ is marked in green. The near-balanced zone (ratio within a factor of 2 of unity) is $\alpha \in [0.012, 0.045]$, shown as a shaded region. Reference values $\alpha \in \{0.10, 0.25, 0.50, 0.75\}$ are annotated with their corresponding gradient ratios, demonstrating that commonly used values produce substantial majority-class dominance at initialisation.}
  \label{fig:alpha_analysis}
\end{figure}

\begin{figure}[H]
  \centering
  \includegraphics[width=\textwidth]{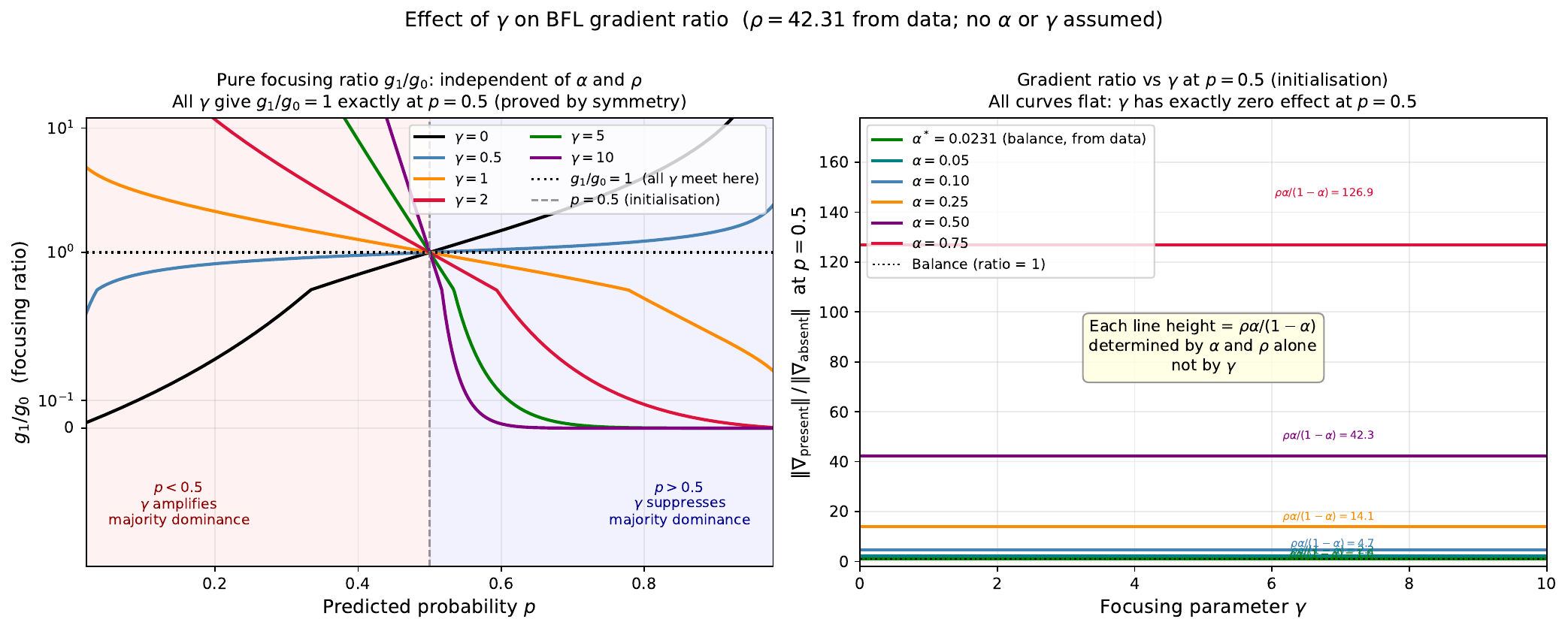}
  \caption{Effect of $\gamma$ on the BFL gradient ratio ($\rho = 42.31$ computed from data; no $\alpha$ or $\gamma$ assumed). \textit{Left:} Pure focusing ratio $g_1/g_0$ across $p$ for all $\gamma$ values, identical to Figure~\ref{fig:alpha_analysis} (left). \textit{Right:} Gradient ratio vs $\gamma$ at $p = 0.5$ (initialisation) for a range of $\alpha$ values. Every curve is perfectly flat: $\gamma$ has exactly zero effect at $p = 0.5$ regardless of $\alpha$ or $\rho$. The height of each flat line equals $\rho\alpha/(1-\alpha)$, determined entirely by $\alpha$ and $\rho$ alone. This confirms that no choice of $\gamma$ can correct the gradient imbalance at initialisation; $\gamma$ only modulates the ratio reactively once $p$ departs from $0.5$.}
  \label{fig:gamma_analysis}
\end{figure}

Figures~\ref{fig:alpha_analysis} and~\ref{fig:gamma_analysis} provide a comprehensive theoretical analysis of the roles of $\alpha$ and $\gamma$ in BFL gradient dynamics, grounded solely in $\rho$ computed from the data. Three conclusions follow directly. First, $\gamma$ has exactly zero effect on the gradient ratio at $p = 0.5$. This is not an approximation but an exact result: the focusing ratio $g_1/g_0 = 1$ at $p = 0.5$ for any $\gamma$, as proved by symmetry of the BFL loss about $p = 0.5$. The gradient ratio at initialisation is therefore $\rho\alpha/(1-\alpha)$, a function of $\alpha$ and $\rho$ alone. Second, $\alpha$ determines the gradient imbalance at initialisation. At $\rho = 42.31$, the balanced value is $\alpha^* = 1/(1+\rho) \approx 0.023$, far below the conventional recommendation of $\alpha = 0.75$. The formula $\alpha = r/(\rho + r)$ gives the required $\alpha$ for any target gradient ratio $R$, computed analytically from $\rho$ without grid search. Third, $\gamma$ only acts reactively. Once training begins and $p$ moves away from $0.5$, $\gamma$ modulates the focusing ratio: it amplifies majority dominance when $p < 0.5$ (model wrong) and suppresses it when $p > 0.5$ (model correct). But by the time $p >
0.5$, the initial gradient imbalance has already driven the model towards predicting presence for all employees. The focusing mechanism cannot undo this damage.

\subsubsection{Empirical analysis of predicted probability distributions}\label{sec:empirical_analysis}

\begin{figure}[H]
  \centering
  \includegraphics[width=\textwidth]{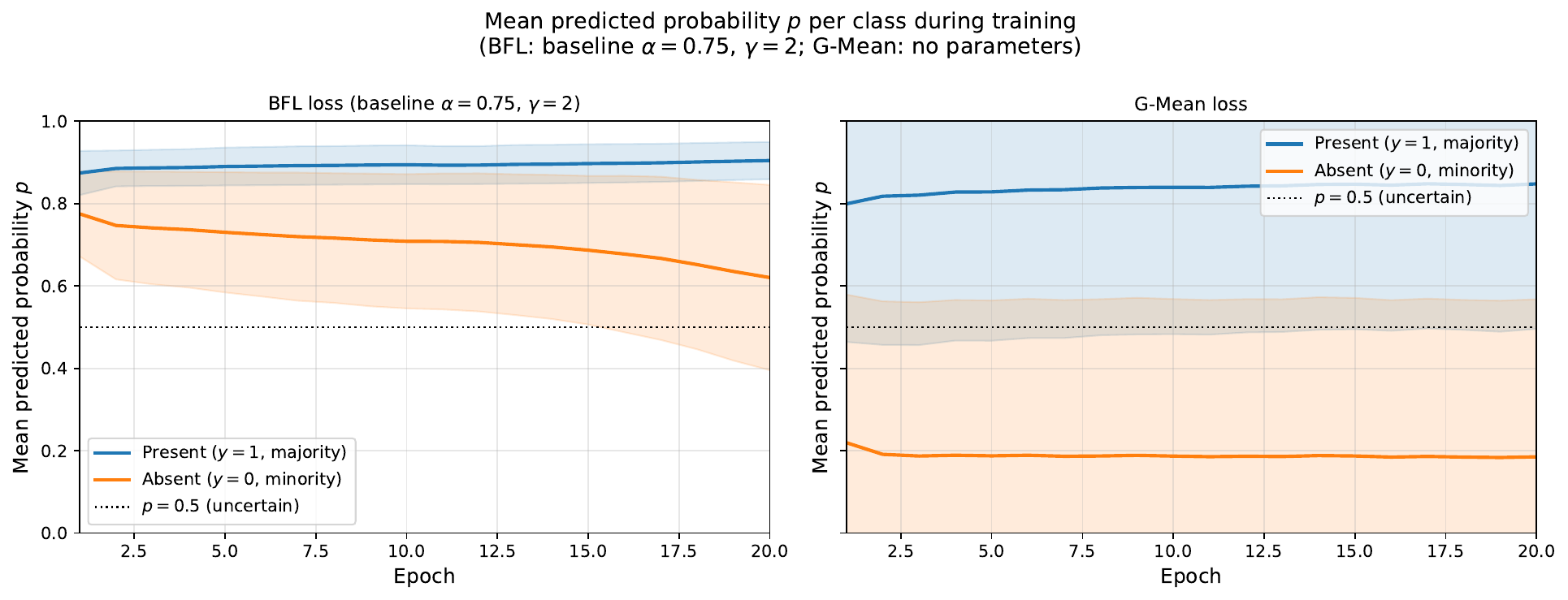}
  \caption{Mean predicted probability $p = \sigma(z)$ per class during training across 20 epochs under BFL loss (left, baseline parameterisation $\alpha=0.75$, $\gamma=2$) and G-Mean loss (right), measured on the LSTM-FCN model. Shaded bands indicate $\pm 1$ standard deviation. Under BFL, the absent-class mean $p$ starts at $0.775$ at epoch $1$ and declines to $0.62$ by epoch $20$, remaining above the decision boundary of $0.5$ throughout training, indicating that absent employees are consistently predicted as present. Under G-Mean loss, the absent-class mean $p$ drops to $0.220$ by epoch $1$ and stabilises at approximately $0.185$ thereafter, reflecting immediate and stable class separation.}
  \label{fig:p_mean_trajectory}
\end{figure}

\begin{figure}[H]
  \centering
  \includegraphics[width=\textwidth]{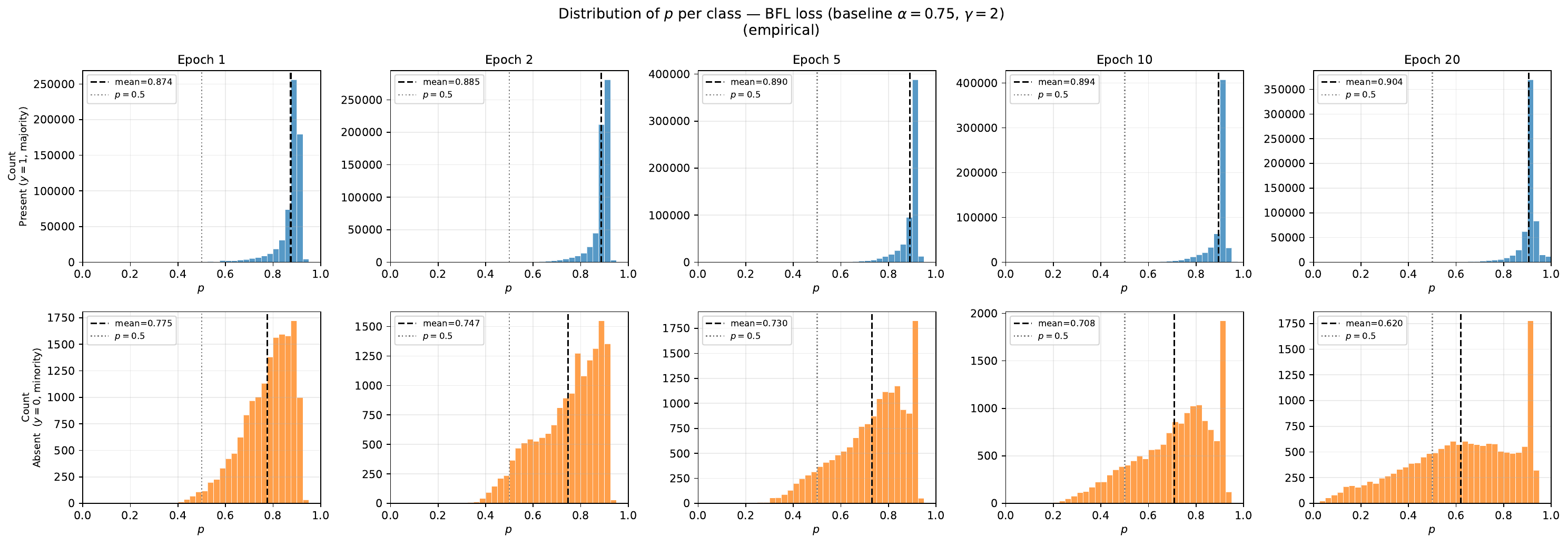}
  \caption{Distribution of $p$ per class at epochs $1$, $2$, $5$,
    $10$, and $20$ under BFL loss with baseline parameterisation
    ($\alpha=0.75$, $\gamma=2$). These figures demonstrate the
    predicted probability distributions that produce the specificity
    collapse of $0.124$ observed in
    Table~\ref{tab:loss_comparison}. \textit{Top row (present,
    $y=1$, majority):} The distribution is sharply concentrated near
    $p \approx 0.87$--$0.91$ from epoch $1$ onwards, indicating the
    model predicts presence with high confidence for present employees
    throughout training. \textit{Bottom row (absent, $y=0$,
    minority):} The distribution is broadly spread across $[0.5,
    1.0]$ at all epochs, with the mean declining from $0.775$ at epoch
    $1$ to $0.620$ by epoch $20$. Absent employees are predicted as present more often
    than not throughout the entire training run.}
  \label{fig:p_hist_bfl}
\end{figure}

\begin{figure}[H]
  \centering
  \includegraphics[width=\textwidth]{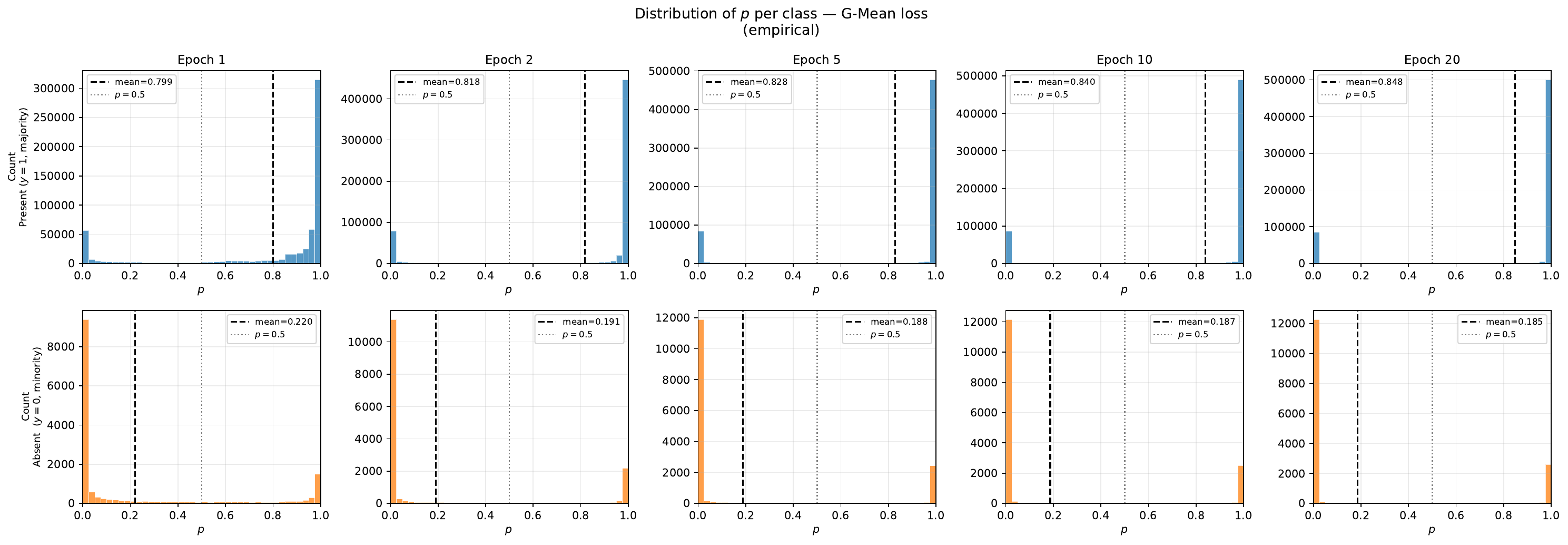}
  \caption{Distribution of $p$ per class at epochs $1$, $2$, $5$,
    $10$, and $20$ under G-Mean loss (empirical; no assumptions about
    $\alpha$ or $\gamma$). \textit{Top row (present, $y=1$, majority):}
    Bimodal distribution with a large spike near $p \approx 1$ and a
    smaller spike near $p \approx 0$, reflecting confident predictions
    in both directions. \textit{Bottom row (absent, $y=0$, minority):}
    The distribution collapses to a sharp spike near $p \approx 0$ from
    epoch $2$ onwards, with the mean stabilising at approximately
    $0.185$--$0.220$. Absent employees are correctly identified as
    absent with high confidence throughout training, consistent with
    the high specificity of $0.844$ reported in
    Table~\ref{tab:loss_comparison}.}
  \label{fig:p_hist_gmean}
\end{figure}

Figures~\ref{fig:p_mean_trajectory}--\ref{fig:p_hist_gmean} present the empirical distribution of $p = \sigma(z)$ per class during training under both loss functions. The BFL figures use the baseline parameterisation ($\alpha = 0.75$, $\gamma = 2$) to provide empirical evidence for the specificity collapse analysed theoretically in Section~\ref{sec:theoretical_analysis}. The G-Mean figures use no $\alpha$ or $\gamma$ parameters.

The most important observation is that neither model passes through $p = 0.5$ at the epoch level. By the time the first epoch completes, the mean $p$ for both classes is already substantially above $0.5$ under both loss functions. This confirms that the model crossed the initialisation point within the first few training batches — before the epoch-level mean was recorded — consistent with the theoretical prediction of extreme gradient imbalance at $p = 0.5$.

The key contrast between the two loss functions lies in the trajectory of the absent class. Under BFL, the absent-class $p$ starts at $0.775$ at epoch $1$ and declines to $0.620$ by epoch $20$, remaining above $0.5$ throughout all $20$ epochs. Under G-Mean loss, the
absent-class $p$ drops immediately to $0.220$ at epoch $1$ and remains stable at $0.185$ by epoch $20$. The self-correcting gradient structure of G-Mean (Proposition~\ref{prop:gmean}) prevents the absent-class $p$ from being swept upward alongside the present class,
which is precisely what occurs under BFL.

\subsection{Evaluation on the test set}\label{sec:evaluation_testset}

\begin{table}[H]
	\centering
	\caption{Test set performance under different window size}
	\label{tab:test_performance_days}
	\begin{tabular}{ccccc}
		\hline
		\textbf{Window size} & \textbf{Precision} & \textbf{F1 Score} & \textbf{Specificity} & \textbf{Balanced Accuracy} \\
		\hline
		40  & 0.993 & 0.883 & 0.785 & 0.790 \\
		80  & 0.994 & 0.859 & 0.815 & 0.786 \\
		160 & 0.993 & 0.653 & 0.877 & 0.681 \\
		\hline
	\end{tabular}
\end{table}

Test set evaluation is conducted using the LSTM-FCN model trained with background information, using a batch size of $1{,}024$, a window size from $40$ to $160$, an output horizon of $5$, and the G-Mean loss function. The balanced accuracy for window sizes between $40$ and $80$ days is approximately $80\%$, which is consistent with results reported in prior studies on the same dataset, although those studies primarily used standard accuracy as the evaluation metric. The test set performance is consistent with the results observed on the validation set, indicating that the model generalises reasonably well to previously unseen data. This consistency suggests that the learned temporal patterns are not over-fitted to the training or validation samples, supporting the robustness of the proposed approach.

\section{Conclusion and future research}\label{sec:conclusion}

This paper proposes a Time Series Classification framework for individual-level staff absenteeism prediction and provides both theoretical and empirical foundations for its practical application. The central methodological contribution is the reformulation of absenteeism prediction as a TSC problem at the individual level, explicitly separating historical attendance sequences from future absence labels to enable genuinely proactive workforce decision support, in contrast to existing approaches that model already-realised outcomes or discard individual-level sequential structure.

A key theoretical contribution of this work is the gradient analysis of loss function behaviour under severe class imbalance, with $\rho = N^{+}/N^{-} = 42.31$ computed directly from the training data as the only fixed quantity. We derive analytically that the BFL
gradient ratio at initialisation is $\rho\alpha/(1-\alpha)$, and that the class weight required for gradient balance is $\alpha^{*} = 1/(1+\rho) \approx 0.023$. This is an exact result requiring no assumption about $\alpha$ or $\gamma$: it follows from $\rho$ alone. The focusing parameter $\gamma$ has exactly zero effect at initialisation (proved by symmetry at $p = 0.5$), and acts only reactively once $p$ departs from $0.5$ during training. The systematic sweep across $\alpha \in \{\alpha^{*}, 0.05, 0.10, 0.20\}$ and $\gamma \in \{0, 1, 2, 5\}$ confirms these theoretical predictions: specificity increases monotonically as $\alpha$ decreases towards $\alpha^{*}$, and $\gamma = 0$ produces the best results at $\alpha^{*}$, confirming that the focusing mechanism is unnecessary and harmful when $\alpha$ is correctly calibrated. The best BFL configuration ($\alpha^{*}, \gamma = 0$) achieves specificity $0.813$ and balanced accuracy $0.888$. G-Mean loss achieves specificity $0.844$ and balanced accuracy $0.734$, making a different trade-off between the two classes. The critical practical distinction is that BFL requires knowing $\rho$ in advance to compute $\alpha^{*}$, while G-Mean's self-correcting gradient structure, whose ratio $\widehat{\text{TNR}}/\widehat{\text{TPR}}$ adapts continuously to the current model state, achieves competitive performance without any parameter calibration. These findings generalise to any severely imbalanced binary classification task in which the majority class is labelled positive, including healthcare monitoring, fraud detection, and fault detection, where $\rho$ is known from the data and the practitioner must choose between a calibrated BFL and a calibration-free G-Mean.

Empirically, we establish that the LSTM-FCN architecture achieves the most consistent precision and specificity; that model performance stabilises for batch sizes of at least $64$, consistent with the requirement that absent-class samples appear reliably in each batch for the G-Mean self-correcting mechanism to function; and that window sizes between $40$ and $80$ days yield the most balanced performance for a 5-day prediction horizon, achieving balanced accuracy of approximately $80\%$ on held-out test data. Incorporating staff background information does not significantly improve predictive performance, suggesting that sequential attendance history alone carries sufficient signal for individual-level prediction within this framework.

The principal limitation of this study is the use of simulated longitudinal attendance data, necessitated by the absence of publicly available individual-level records. The simulation is rigorously calibrated to the distributional properties of the UCI \emph{Absenteeism at Work} benchmark, and all code and data are publicly released to support reproducibility and future comparative evaluation. Validation on real organisational attendance data remains the most important direction for future research, and constitutes a prerequisite for deployment in production workforce management systems. Additional future directions include extending the framework to multivariate attendance sequences incorporating contextual signals such as team composition and shift patterns, and investigating whether the gradient ratio formula $\rho\alpha/(1-\alpha)$ provides reliable practical guidance for loss function selection across other severely imbalanced sequential prediction domains.

\section*{Data and code availability}

All code, data and review results are available at:

\url{https://github.com/nonstopronald/Time-Series-Classification-for-Staff-Absenteeism-Prediction}


\printbibliography


\end{document}